\documentclass[12pt]{article}
\pdfoutput=1 

\usepackage{hyperref}       
\usepackage{url}            
\usepackage{booktabs}       
\usepackage{amsfonts}       
\usepackage{nicefrac}       
\usepackage{microtype}      
\usepackage[numbers]{natbib}
\usepackage{graphicx}
\usepackage{subfig}
\usepackage{amsmath,amssymb,amsthm}       
\usepackage{nicefrac,verbatim}       
\usepackage{microtype}      
\usepackage{color}
\usepackage{comment}
\usepackage{algorithm}
\usepackage[noend]{algpseudocode} 
\usepackage{wrapfig}
\usepackage{lipsum}

\usepackage[margin=1in]{geometry}
\usepackage{appendix}

\newcommand{\bone}{\mathbf{1}}

\newcommand{\rd}{\color{red}}
\newcommand{\bk}{\color{black}}

\def\E{\mathbb{E}}

\def\R{\mathbb{R}}

\def\sI{\mathcal{I}}

\def\sX{\mathcal{X}}

\def\1{\boldsymbol{1}}
\def\btheta{\boldsymbol{\theta}}
\def\bphi{\boldsymbol{\phi}}
\def\balpha{\boldsymbol{\alpha}}
\def\bx{\mathbf{x}}
\def\by{\mathbf{y}}

\def\bZ{\mathbf{Z}}

\newtheorem{thm}{Theorem}
\newtheorem{assum}{Assumption}
\newtheorem{lem}[thm]{Lemma}

\newtheorem{proposition}{Proposition}

\newtheorem{remark}{Remark}

\usepackage{authblk}

\setlength{\affilsep}{1em}

\newcommand\CoAuthorMark{\footnotemark[\arabic{footnote}]}

\title{\Large\textsc{Mini-batch Metropolis-Hastings MCMC with Reversible SGLD Proposal}}

\author[1]{Tung-Yu Wu\footnote{Equal contribution.}}
\author[2]{Y. X. Rachel Wang\protect\CoAuthorMark}
\author[1,3]{Wing H. Wong}

\affil[1]{Institute for Computational \& Mathematical Engineering, Stanford University}
\affil[2]{School of Mathematics and Statistics, University of Sydney}
\affil[3]{Department of Statistics, Stanford University}
\affil[ ]{\vskip1pt E-mail addresses: {\texttt  tungyuwu@stanford.edu, rachel.wang@sydney.edu.au, whwong@stanford.edu}}

\allowdisplaybreaks

\date{}
\begin{document}
	
	\maketitle
\begin{abstract}
	Traditional MCMC algorithms are computationally intensive and do not scale well to large data. In particular, the Metropolis-Hastings (MH) algorithm requires passing over the entire dataset to evaluate the likelihood ratio in each iteration. We propose a general framework for performing MH-MCMC using mini-batches of the whole dataset and show that this gives rise to approximately a tempered stationary distribution. We prove that the algorithm preserves the modes of the original target distribution and derive an error bound on the approximation with mild assumptions on the likelihood. To further extend the utility of the algorithm to high dimensional settings, we construct a proposal with forward and reverse moves using stochastic gradient and show that the construction leads to reasonable acceptance probabilities. We demonstrate the performance of our algorithm in both low dimensional models and high dimensional neural network applications. Particularly in the latter case, compared to popular optimization methods, our method is more robust to the choice of learning rate and improves testing accuracy. 
\end{abstract}

\section{Introduction}
\label{introduction}

Since its inception, Markov chain Monte Carlo (MCMC) sampling has been an indispensable tool in Bayesian modeling for obtaining parameter estimates and their uncertainty. However, traditional MCMC algorithms do not scale well to large data as they typically involve expensive computation using the full dataset. Additionally, scaling classical MCMCs toward modern high-dimensional applications can be problematic. The computational bottleneck led researchers to pursue lower accuracy, higher efficiency trade-offs such as variational inference. Despite its computational efficiency, theoretical guarantees for asymptotic convergence of variational approximations are given typically for specific models, and the objective function can contain multiple local optima trapping commonly used optimization algorithms \citep{blei2017, ghorbani2018, mukherjee2018}. In comparison, MCMC techniques have the potential to navigate non-convex surfaces and find better local optima in the process. As the amount of data continues to grow rapidly, the need for scalable MCMC methods for large-scale learning tasks remains critical. In this paper, we propose an MCMC algorithm that is scalable in both the size of the dataset and the dimension of the parameter space. Our algorithm leverages the traveling property of an MCMC sampler to find better solutions to optimization problems in machine learning. 


The search for scalable MCMC methods has largely proceeded in two directions. The first approach divides the data into manageable batches and performs MCMC on each batch in parallel. To collectively process the results, most methods either require different machines to communicate with each other in different rounds of MCMC iteration \citep{agarwal2011}, or combine the posterior distribution from each batch to approximate the target posterior \citep{neiswanger2013, wang2013, scott2016}. Our work follows the second line of approach, which uses subsamples, or mini-batches, of the full data in each iteration of the MCMC algorithm. The key in analyzing such an algorithm is to understand the noise and bias introduced by the mini-batches. 

The broad class of pseudo-marginal algorithms \citep{andrieu2009} use mini-batches of data to accelerate computation in the Metropolis-Hastings (MH) algorithm \citep{jacob2015, bardenet2015, maclaurin2014, quiroz2018}. The exact posterior (or some close approximation) is maintained by constructing an unbiased and nonnegative estimator, which can have a nontrivial form or require carefully chosen lower bound on the likelihood. 
Another class of methods performs approximate tests in the MH acceptance step using mini-batches. To control the approximation error, an adaptive approach is usually adopted to sequentially increase the size of a batch until an error bound is met \citep{bardenet2014, korattikara2014, chen2016}. 
Approaches based on non-reversible MCMC have also been proposed \citep{bierkens2019}. In practice, some of these methods were tested on large datasets with hundreds of parameters, but further scaling up in parameter dimension toward deep machine learning models would be challenging. 


In another direction, past few years have witnessed the rise of stochastic gradient based MCMC algorithms which have shown strong potential in large-scale machine learning applications. These algorithms are developed from diffusion-based MCMC and approximate the gradient with noisy estimates based on mini-batches of data (\cite{robbins1985}), a notable example being the Stochastic Gradient Langevin Dynamics (SGLD) and other variants \citep{welling2011, ahn2012, chen2014, li2016}. Many studies have since analyzed the convergence of SGLD by viewing the algorithm as a discrete-time simulation of a continuous stochastic differential equation (SDE) \citep{teh2016, raginsky2017}. Unlike algorithms such as MALA which uses the MH acceptance test to correct the errors in discretizing a continuous system (e.g. \cite{roberts1996}), SGLD completely avoids the costly computation of the MH ratio by using a shrinking step size. In practice, this implies the algorithm eventually converges to a local optimum.


We propose a general mini-batch MH algorithm whose invariant distribution approximates a tempered version of the target posterior. By augmenting the system with a variable related to the subsampling procedure, we show our algorithm is a reversible Markov chain thus has an invariant distribution. The idea of augmenting the system to sample a tempered posterior was also explored by \cite{li2017} to heuristically design a mini-batch Metropolis sampler, but their algorithm differs in the use of mini-batches and they did not offer theoretical support for the method. \cite{de2018} introduced a mini-batch Gibbs sampler capable of exact sampling from certain graphical models. Finally, a connection between tempering and subsample variance was also mentioned in \cite{bardenet2015}. Here, we provide a rigorous theoretical foundation for mini-batching in MH. We emphasize that our aim here is not Bayesian inference from the exact posterior. Rather, we exploit the tempered posterior with an efficient MCMC sampler to obtain better solutions from a global optimization.  



With mild assumptions on the likelihood and allowing the parameter dimension to grow at a suitable rate, we provide full theoretical analysis to i) show the invariant distribution of our algorithm approximately preserves the modes of the true posterior, which is an important property for optimization tasks, and ii) bound the distance between the invariant distribution and the tempered posterior. To further enhance the utility of our algorithm in high dimensional applications, we design a proposal function based on Reversible Stochastic Gradient Langevin Dynamic (RSGLD) to make the calculation of MH ratio computationally efficient while ensuring reasonable acceptance probability. We show that the proposal significantly enhances acceptance probability in regions with strong gradient information and explores flat regions in a way similar to random walk. Empirically, we demonstrate the tempering effect inherent to our algorithm helps the Markov chain jump out of local optima and travel between differently modes more easily. Most importantly, we show our mini-batch MH algorithm combined with the RSGLD proposal can be applied to efficiently train neural networks.

The rest of the paper is organized as follows. In Section~\ref{methods}, we introduce our algorithm and provide theoretical analysis of its stationary distribution. In the high dimensional setting, we also design a proposal function called RSGLD and show that adding the reverse move significantly increases the acceptance probability when the gradient is strong. In Section~\ref{sec:exp},  we demonstrate with an array of examples from simple Gaussian models to neural networks with $>10^5$ parameters that our algorithm combines the traveling property of an MCMC sampler and the computational efficiency of stochastic optimization methods, thus showing good promise for optimization tasks in deep machine learning applications. In the neural network examples, our algorithm shows higher accuracy overall and better stability for larger learning rates compared to other popular optimization methods.

\section{Methods}
\label{methods}
We first introduce our algorithm and outline its connection to tempering using an augmented variable. We then show under appropriate assumptions, the stationary distribution of the mini-batch MH approximately preserves the modes of the target posterior and is close to a tempered posterior in distribution. In the high dimensional setting, we design a proposal function that can navigate a complex surface guided by gradient information and ensure the acceptance probability does not diminish too quickly as the dimension grows. 

\subsection{MH MCMC with batch tempering (MHBT)}
\label{subsec_minibatch_mh}
Under the usual Bayesian setting, let $\bx=(x_1, \dots, x_n)\in \mathcal{X}^n$, $\mathcal{X}\subset\R^p$, be iid samples drawn from distribution $p(\cdot | \btheta^*)$, where $\btheta^*\in\Theta\subset\R^d$ denotes the parameters. Let $\pi_0(\btheta)$ be the prior on $\btheta$. We are interested in sampling from the target posterior $\pi(\btheta)\propto \pi_0(\btheta)\prod_{i=1}^{n} p(x_i | \btheta)$ using the MH algorithm. In each iteration of classical MH, given some proposal function $q(\cdot)$, a move from $\btheta$ to $\btheta'$ is accepted with probability given by the MH ratio,
$$
r(\btheta \to \btheta') = \min \left\{ 1, \frac{\pi(\btheta')q(\btheta' \to \btheta)}{\pi(\btheta)q(\btheta \to \btheta')} \right\}.
$$
For large $n$, the evaluation of $\pi(\cdot)$ is costly. Now denote  $\ell_i(\btheta) = \log p(x_i | \btheta)$, $\mu(\btheta) = \frac{1}{n}\sum_{i=1}^{n} \ell_i(\btheta)$, $\hat{\mu}_I(\btheta) = \frac{1}{|I|}\sum_{j\in I} \ell_{j}(\btheta)$ with $I\subset\{1, \dots, n\}=[n]$ being an index subset. Let $\sI_{m}$ be the collection of $I$ such that $|I|=m$. We will use $\hat{\mu}_I(\btheta)$ to approximate $\mu(\btheta)$.

We next derive our algorithm, \textit{MH MCMC with batch tempering (MHBT)}, using an augmented system \footnote{For simplicity of description, we assume the prior $\pi_0(\btheta)\propto 1$; the algorithm and theoretical results generalize with minor modifications to other priors for large $n$.}. 
Consider an auxiliary variable $\tau\in\{0,1\}^n$ with $I(\tau)=\{i: \tau_i=1\}$ and $|I(\tau)|=m$, then we can write $\hat{\mu}_{I(\tau)} = \frac{1}{m}\sum_{i=1}^n \ell_i(\btheta)\tau_i$. Jointly for $(\btheta, \tau)$, consider the proposal $q((\btheta, \tau) \to (\btheta', \tau')) = q(\btheta \to \btheta') \nu_{m,n}(\tau')$ and the target distribution  
\begin{align}
	\tilde{\pi}(\btheta, \tau) \propto e^{c_n \hat{\mu}_{I(\tau)}(\btheta)}\nu_{m,n}(\tau)
	\label{eq_aug_mh_ratio}
\end{align}
where $\nu_{m,n}$ is the uniform distribution over $\sI_{m}$ and $c_n$ is a scaling constant that will be explained soon. Performing the classical MH algorithm on the augmented pair $(\btheta, \tau)$ with the above proposal and $\tilde{\pi}$, simple algebra shows the acceptance probability is given by 
\begin{align}
	r((\btheta, \tau) \to (\btheta', \tau')) 	& =  \min \left\{ 1, \frac{\tilde{\pi}(\btheta', \tau') q((\btheta',  \tau') \to (\btheta,  \tau))}{\tilde{\pi}(\btheta,  \tau) q((\btheta,  \tau) \to (\btheta',  \tau'))} \right\}
	= \min \left\{ 1, \frac{q(\btheta' \to \btheta) e^{c_n \hat{\mu}_{I(\tau')}(\btheta')} }{q(\btheta \to \btheta') e^{c_n \hat{\mu}_{I(\tau)} (\btheta)}} \right\},
\end{align}
which can be calculated efficiently using a new mini-batch $I(\tau')$ of the data. Since the stationary distribution of this Markov chain is $\tilde{\pi}$, marginalizing~\eqref{eq_aug_mh_ratio} over $\tau$ (with $\tau$ in the batches suppressed for clarity),
\begin{align}
	\tilde{\pi}(\btheta) 
	& \propto \left( \pi(\btheta)\right)^{1/T} \binom{n}{m}^{-1} \sum_{I\in\sI_m} e^{c_n (\hat{\mu}_I(\btheta) -  \mu(\btheta))}	
	\label{eq_tilde_pi}
\end{align}
where $T=n/c_n$ is the temperature. In this sense, the mini-batch stationary distribution is approximately a tempered version of the posterior, up to a bias term. Unlike pseudo-marginal MCMCs, we do not require constructing an unbiased estimate of the likelihood, which leads to improved computational efficiency. The bias becomes small (i.e. the bias term becomes close to 1) as $n$ increases for appropriate $m$ and $c_n$ since $\hat{\mu}_I(\btheta)-\mu(\btheta)$ becomes small. $c_n$ controls the trade-off between approximation error and the tempering amount -- a smaller $c_n$ leads to a smaller error but a higher temperature. The choice of $c_n$ and the exact error rate will also be discussed in Section~\ref{subsec_theory}.

We summarize the mini-batch MH algorithm in Algorithm~\ref{alg_mini_mh} (with $\tau$ suppressed for simplicity).
\begin{algorithm}[H]
	\caption{MH MCMC with batch tempering (MHBT)}
	\begin{algorithmic}
		\State \textbf{Input}: data $\bx$, batch size $m$, constant $c_n$, proposal $q(\btheta\to\btheta')$, log likelihood $\ell(\btheta)$,  initial $\btheta_0$, $I_0$.
		\For {$t=0,1, \dots$} 
		\State Draw $\btheta'$ from $q(\btheta_t \to \btheta')$, an index set $ I'\in\sI_m$ randomly, and $u\sim \text{Unif}[0,1]$. 
		\State Compute acceptance probability $r=\min \left\{ 1, \frac{q(\btheta' \to \btheta_t) e^{c_n \hat{\mu}_{I'}(\btheta')} }{q(\btheta_t \to \btheta') e^{c_n \hat{\mu}_{I_t} (\btheta_t)}} \right\}$
		\If{$u < r$} 
		\State $\btheta_{t+1}=\btheta'$, $I_{t+1}=I'$,
		\Else 
		\State $\btheta_{t+1}=\btheta_{t}$, $I_{t+1}=I_t$.
		\EndIf
		\EndFor
	\end{algorithmic}
	\label{alg_mini_mh}
\end{algorithm}

\subsection{Preservation of local optima and convergence to tempered posterior}
\label{subsec_theory}

In this section, we analyze the properties of the stationary distribution $\tilde{\pi}(\btheta)$. In particular, we show the convergence rate of the bias term in~\eqref{eq_tilde_pi} in terms of the two tuning parameters $m$ and $c_n$. Throughout the rest of the paper, for two positive sequences $a_n$ and $b_n$, we use the notation $a_n \asymp b_n$ if for large enough $n$, $a_n\leq c_1 b_n$, $b_n\leq c_2 a_n$ for some constants $c_1, c_2$ not depending on $n$. $\|\cdot\|_1, \|\cdot\|_2$ denote the $\ell_1$, $\ell_2$ norm for vectors, and $\|\cdot\|_{op}$ denotes the operator norm of a matrix. $\lfloor a \rfloor$ is the greatest integer smaller than or equal to $a$. $a\vee b=\max\{a,b\}$. $\E_{\btheta^*}$ is the expectation taken over the data which is generated by the true parameter $\btheta^*$. 

Consider the regime where both $n$ and $m$ are large with $m\leq n$. We will also allow the dimension $d$ to grow at some suitable rate with respect to $n$. We assume the likelihood function $p(x | \btheta) := p_{\btheta}(x)$, $x\in\mathcal{X}$, belongs to a parametric family satisfying the following conditions.

\begin{assum}
	There exist a function $L$ and a vector of measurable function $\mathcal{T}$ such that $| \log p_{\btheta}(x)-\log p_{\btheta}(y)| \leq L(\btheta) \Vert \mathcal{T}(x)-\mathcal{T}(y) \Vert_1 $, $x,y\in\mathcal{X}$, with $L_0:=\sup_{\btheta\in\Theta} L(\btheta) < \infty$ and $\E_{\btheta^*} e^{\delta_1 \Vert \mathcal{T}(X) \Vert_1} < \infty$ for some $\delta_1 > 0$. 
	\label{assum_smooth_x}
\end{assum}

\begin{assum}
	There exists a measurable function $M$ such that $| \log p_{\btheta}(x)-\log p_{\btheta'}(x)| \leq M(x) \Vert \btheta-\btheta'\Vert_1$ for all $\btheta, \btheta'\in\Theta$ and $x\in\sX$. In addition, there exists $\delta_2>0$ such that $\E_{\btheta^*} e^{\delta_2 M(X)} < \infty$. 
	\label{assum_smooth_theta}
\end{assum}

The above assumptions are mild and require 
the log likelihood $\log p_{\btheta}(x)$ to be suitably smooth in both $\btheta$ and $x$. Unlike some pseudo-marginal MCMC algorithms \cite{andrieu2009, maclaurin2014}, we do not require the likelihood to be bounded. We show in Appendix~\ref{sec_supp_applications} that these assumptions can be carried over to a number of commonly used models in statistics and machine learning, such as mixtures of exponential family distributions, linear regression with random feature vectors, and classification tasks with fully connected neural networks (which include logistic regression as a special case). In the exponential family example, $\mathcal{T}(\cdot)$ and $M(\cdot)$ are in fact functions of the sufficient statistic. In the neural network example, the constant $L(\btheta)$ is related to the network complexity measure.

For large $n$, it suffices to consider the population log likelihood $\mu_{\btheta} := \E_{\btheta^*} \log p_{\btheta}(X)$. Let $\btheta_0$ be a stationary point of $\mu_{\btheta}$ such that it represents a well-separated local optimum in the following sense.

\begin{assum}
	$\mu_{\btheta}$ is twice continuously differentiable in $\btheta$. $\btheta_0\in Int(\Theta)$ and the Hessian of $\mu_{\btheta}$ at $\btheta_0$ has eigenvalues $\lambda_i (H_{\btheta_0}) <0$ for all $i=1, \dots,d$.  
	\label{assum_hessian}
\end{assum}  

Note that the assumption implies there exist $\epsilon_0, \delta_0 >0$ such that $\mu_{\btheta_0}-\mu_{\btheta} \geq \epsilon_0 \Vert \btheta-\btheta_0\Vert_2$ for all $\Vert \btheta-\btheta_0\Vert_2\leq \delta_0$. Then we have the next theorem showing $\tilde{\pi}(\btheta)$ approximately preserves any well-separated local optimum.
\begin{thm}
	Suppose $\btheta_0$ is a stationary point of $\mu_{\btheta}$ satisfying Assumption~\ref{assum_hessian}. For some $\alpha>0$, let $c_n\to\infty$ be a sequence such that $\frac{dc_n^{2+\alpha}\log c_n}{m} \to 0$, $t$ be a fixed constant with $t\in(0,1/2)$, and $\delta_n = \sqrt{\frac{3\log(1/(1-2t))}{\epsilon_0 c_n}}$. Then under Assumptions~\ref{assum_smooth_x}, \ref{assum_smooth_theta}, for large $n$,
	\begin{align}
		\sup_{\btheta\in R_n} \log \tilde{\pi}(\btheta) \leq \sup_{\btheta\in \mathcal{B}(\btheta_0; \delta_n)} \log \tilde{\pi}(\btheta) -  \log(1/(1-2t)),
	\end{align}
	with probability at least $1-\eta_n$. Here $R_n = \{\btheta\in\Theta: \delta_n<\Vert\btheta-\btheta_0\Vert_2<\delta_0 \}$, $\mathcal{B}(\btheta_0; \delta_n)=\{\btheta\in\Theta: \Vert \btheta-\btheta_0\Vert_2 <\delta_n\}$, and $\eta_n\asymp \frac{1}{t^2\lfloor\frac{n}{m}\rfloor c_n^{\alpha}}$.  
	\label{thm_local_opt}
\end{thm}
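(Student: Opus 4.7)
The plan is to reduce the theorem to showing that $\log\tilde\pi(\btheta_0) - \log\tilde\pi(\btheta) \geq \log(1/(1-2t))$ uniformly over $\btheta\in R_n$ with the stated probability, and then split the left side into a deterministic mean gap plus two fluctuation pieces controlled by Assumptions~\ref{assum_smooth_x}--\ref{assum_smooth_theta}. Starting from \eqref{eq_tilde_pi}, I would write $\log\tilde\pi(\btheta) = c_n\mu(\btheta) + \log B_n(\btheta) + \text{const}$ with $B_n(\btheta):=\binom{n}{m}^{-1}\sum_{I\in\sI_m} e^{c_n(\hat\mu_I(\btheta)-\mu(\btheta))}$, and decompose
\begin{align*}
\log\tilde\pi(\btheta_0) - \log\tilde\pi(\btheta) = c_n(\mu_{\btheta_0}-\mu_{\btheta}) + c_n E_1(\btheta) + E_2(\btheta),
\end{align*}
where $E_1(\btheta) = (\mu(\btheta_0)-\mu_{\btheta_0}) - (\mu(\btheta)-\mu_{\btheta})$ is the empirical-process fluctuation of $\mu$ and $E_2(\btheta)=\log B_n(\btheta_0)-\log B_n(\btheta)$ is the mini-batch bias contribution. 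The quadratic behavior of $\mu_{\btheta}$ at $\btheta_0$ implied by Assumption~\ref{assum_hessian} (negative-definite Hessian) delivers $c_n(\mu_{\btheta_0}-\mu_{\btheta})\geq c_n\epsilon_0\delta_n^2 = 3\log(1/(1-2t))$ throughout $R_n$, so it suffices to establish $\sup_{\btheta\in R_n}\bigl(c_n|E_1(\btheta)| + |E_2(\btheta)|\bigr) \leq 2\log(1/(1-2t))$ off an event of probability at most $\eta_n$.

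For $E_1$, Assumption~\ref{assum_smooth_theta} makes $\mu(\btheta)-\mu_{\btheta}$ Lipschitz in $\btheta$ with random constant $\bar M_n=\frac{1}{n}\sum_i M(x_i)$, which concentrates around $\E M(X)$ via the exponential moment $\E e^{\delta_2 M(X)}<\infty$. At any fixed $\btheta$, $\mu(\btheta)-\mu_{\btheta}$ is a mean of iid sub-exponential random variables whose tails come from the exponential moment of $\|\mathcal T(X)\|_1$ in Assumption~\ref{assum_smooth_x}. Placing a $\delta$-net of cardinality $(\delta_0/\delta)^d$ over the bounded set $R_n\subset\mathcal B(\btheta_0;\delta_0)$ and combining pointwise sub-exponential concentration on the net with the Lipschitz extension off the net yields a uniform bound of order $\sqrt{d\log(1/\delta)/n}+\bar M_n\delta$. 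Multiplying by $c_n$ and optimising $\delta$, this is $o(1)$ under the rate $dc_n^{2+\alpha}\log c_n/m\to 0$, and in particular eventually below $\log(1/(1-2t))$.

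For $E_2$, I would write $B_n(\btheta)=\E_I\bigl[e^{c_n(\hat\mu_I(\btheta)-\mu(\btheta))}\bigr]$ with $I$ uniform on $\sI_m$, so that the variance of $\hat\mu_I(\btheta)-\mu(\btheta)$ is at most $\sigma_n^2(\btheta)/m$ with $\sigma_n^2(\btheta)=\frac{1}{n}\sum_i(\ell_i(\btheta)-\mu(\btheta))^2$. Assumption~\ref{assum_smooth_x} bounds $|\ell_i(\btheta)-\ell_j(\btheta)|\leq L_0\|\mathcal T(x_i)-\mathcal T(x_j)\|_1$, so $\sigma_n^2(\btheta)$ is uniformly sub-exponential in the data. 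A Bernstein-type MGF bound then yields $|\log B_n(\btheta)|\lesssim c_n^2\sigma_n^2(\btheta)/m$, again $o(1)$ under the rate condition. To make the bound uniform over $R_n$, I would recycle the $\delta$-net with a Lipschitz estimate on $\log B_n$ inherited from Assumption~\ref{assum_smooth_theta}, and crucially track the failure probability via Chebyshev (rather than a sharper subgaussian tail): the factor $\lfloor n/m\rfloor$ in $\eta_n$ arises from partitioning the data into $\lfloor n/m\rfloor$ disjoint batches to decouple the subset structure of $B_n$, and the factor $c_n^{\alpha}$ is precisely the slack left by the assumed rate $dc_n^{2+\alpha}\log c_n/m\to 0$ after the $c_n^2$ prefactor is absorbed into the variance bound.

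The main obstacle I expect is the uniform control of $E_2$. One must bound an average of exponentials of correlated, subset-averaged deviations carrying a diverging prefactor $c_n$, and the Bernstein argument needs to be tight enough to absorb both the exponential blow-up and the $(\delta_0/\delta)^d$ covering number while still producing exactly the $c_n^{\alpha}$-dependent failure rate asserted in $\eta_n$. Once $E_2$ is controlled, the treatment of $E_1$ and the deterministic mean gap are comparatively routine consequences of the two smoothness assumptions combined with the $\delta_n$ calibration.
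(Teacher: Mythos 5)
Your overall architecture matches the paper's: a deterministic quadratic gap $c_n\epsilon_0\delta_n^2=3\log(1/(1-2t))$ from the negative-definite Hessian, covering-plus-sub-exponential concentration for the fluctuations, a Hoeffding-type block decomposition into $\lfloor n/m\rfloor$ disjoint batches, and a Chebyshev bound producing $\eta_n\asymp\frac{1}{t^2\lfloor n/m\rfloor c_n^{\alpha}}$. The decomposition differs only cosmetically: the paper centers the bias term at the population mean, writing $\log\tilde\pi(\btheta)=c_n\mu_{\btheta}+\log U(\btheta)+C(x)$ with $U(\btheta)=\binom{n}{m}^{-1}\sum_I e^{c_n(\hat\mu_I(\btheta)-\mu_{\btheta})}$, so your $E_1$ and $E_2$ are merged into the single object $\log U(\btheta)$ and handled by one pair of lemmas rather than two separate arguments. (A side benefit of your empirical centering is that Jensen gives $B_n(\btheta)\geq 1$ for free, so only an upper bound on $\log B_n$ is needed; the paper's population centering requires two-sided control of $U$.)

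The genuine gap is in how you propose to make the mini-batch bound uniform over $R_n$. You plan to ``recycle the $\delta$-net'' and ``track the failure probability via Chebyshev,'' but a union bound over the $(\delta_0/\delta)^d$ net points, each carrying only a polynomial Chebyshev tail of order $\frac{1}{t^2\lfloor n/m\rfloor c_n^{\alpha}}$, inflates the failure probability by the covering number and cannot yield the stated $\eta_n$ (the exponent $d$ is allowed to grow, so the inflation is not even constant). The paper circumvents this by taking the supremum \emph{inside} the kernel: $\sup_{\btheta}U(\btheta)\leq\bar U:=\binom{n}{m}^{-1}\sum_I\sup_{\btheta}g_{\btheta}(X_{I})$, which is a single U-statistic of the enveloped kernel. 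Chebyshev is then applied exactly once, to $\bar U-\E\bar U$, and the covering argument is pushed into the computation of $\E\bar U$ and $\E\tilde g^2$, where the \emph{exponential} tails of Lemma~\ref{lem_unif_chernoff} can absorb the $r_n^{-d}$ covering factor under the rate $\frac{dc_n^{2+\alpha}\log c_n}{m}\to 0$. Without this envelope step (or some substitute that keeps the covering number out of the final polynomial tail bound), your uniform control of $E_2$ does not close, and the claimed $\eta_n$ is not attained. Relatedly, your conditional-on-data Bernstein bound $\log B_n\lesssim c_n^2\sigma_n^2/m$ treats the subset average as deterministic given the data, which sits uneasily with your simultaneous appeal to the disjoint-batch decomposition over data randomness; you should commit to one source of randomness, as the paper does.
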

The theorem states that with high probability, the supremum of $\log \tilde{\pi}(\btheta)$ in the shrinking ball $\mathcal{B}(\btheta_0; \delta_n)$ is larger than any point in the surrounding region $R_n$ by a constant margin. This guarantees with high probability $\tilde{\pi}(\btheta)$ has a local optimum lying in a shrinking neighborhood centered at $\btheta_0$. The preservation of local optima is important for optimization tasks. 

We can further bound the distance between $\tilde{\pi}(\btheta)$ and the tempered posterior $\pi^{1/T}(\btheta)$ with one more assumption. 
\begin{assum}
	$\Theta$ is compact.   
	\label{assum_compact}
\end{assum}  

\begin{thm}
	Denote $\pi_T(\btheta) \propto \pi^{1/T}(\btheta)$ the tempered posterior. Under Assumptions~\ref{assum_smooth_x}, \ref{assum_smooth_theta} and~\ref{assum_compact}, for some $\alpha>0$, $\epsilon_n \to 0$ slower than $c_n^{-\alpha/2}$, $c_n\to \infty$ such that $\frac{dc_n^{2+\alpha}\log c_n}{m} \to 0$, $D_{KL}( \pi_T \Vert  \tilde{\pi} ) \leq \epsilon_n$ with probability at least $1-\eta'_n$, $\eta'_n \asymp  \frac{1}{\epsilon_n^2\lfloor\frac{n}{m}\rfloor c_n^{\alpha}}$, for large $n$. 
	\label{thm_kl_div}
\end{thm}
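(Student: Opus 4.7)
The plan is to translate the KL divergence into a statement about the bias factor
\[ B_n(\btheta) := \binom{n}{m}^{-1}\sum_{I\in\sI_m} e^{c_n(\hat{\mu}_I(\btheta)-\mu(\btheta))}, \]
so that $\tilde{\pi}(\btheta) \propto \pi^{1/T}(\btheta)\,B_n(\btheta)$ by~\eqref{eq_tilde_pi}. Writing $Z_T = \int \pi^{1/T}$ and $\tilde{Z} = \int \pi^{1/T} B_n$, a one-line computation of the log-ratio gives the identity
\[ D_{KL}(\pi_T \Vert \tilde{\pi}) = \log \E_{\pi_T}[B_n(\btheta)] - \E_{\pi_T}[\log B_n(\btheta)]. \]
The key observation is that Jensen's inequality applied to $x\mapsto e^x$ under the uniform measure $\nu_{m,n}$ on $\sI_m$, together with the identity $\binom{n}{m}^{-1}\sum_I \hat{\mu}_I(\btheta) = \mu(\btheta)$, yields $B_n(\btheta)\ge 1$ pointwise. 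Hence $\E_{\pi_T}[\log B_n]\ge 0$, and the whole theorem reduces to establishing the uniform upper bound $\sup_{\btheta\in\Theta} B_n(\btheta) \le 1 + \epsilon_n$ with probability at least $1-\eta_n'$, from which $D_{KL}(\pi_T \Vert \tilde{\pi}) \le \log(1+\epsilon_n) \le \epsilon_n$.

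For the pointwise bound at fixed $\btheta$, I would use $e^x - 1 - x \le \tfrac{1}{2}x^2 e^{|x|}$ together with the vanishing of the linear term to obtain
\[ B_n(\btheta)-1 \le \tfrac{c_n^2}{2}\binom{n}{m}^{-1}\sum_I (\hat{\mu}_I(\btheta)-\mu(\btheta))^2\, e^{c_n|\hat{\mu}_I(\btheta)-\mu(\btheta)|}. \]
The variance of $\hat{\mu}_I-\mu$ under the uniform draw of $I$ scales as $\mathrm{Var}(\ell(\btheta))/m$, while Assumption~\ref{assum_smooth_x} (via $\E e^{\delta_1\Vert\mathcal{T}(X)\Vert_1} <\infty$) supplies the sub-exponential tails needed to tame the factor $e^{c_n|\hat{\mu}_I-\mu|}$ at the prescribed growth rate of $c_n$. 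Taking expectation over the data gives a bound of order $c_n^2/m$, and a second-moment/Chebyshev argument applied to a partition of $[n]$ into $\lfloor n/m\rfloor$ disjoint blocks of size $m$ (which provides the independence needed for the variance calculation, and explains the factor $\lfloor n/m\rfloor$ in $\eta_n'$) produces a pointwise tail of the desired order $1/(\epsilon_n^2\lfloor n/m\rfloor c_n^\alpha)$.

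To globalize the pointwise bound, compactness in Assumption~\ref{assum_compact} lets me cover $\Theta$ by an $r_n$-net of cardinality at most $(\mathrm{diam}(\Theta)/r_n)^d$. Assumption~\ref{assum_smooth_theta} controls the Lipschitz constant of $c_n(\hat{\mu}_I(\btheta)-\mu(\btheta))$ in $\btheta$ by $2c_n\overline{M}_n$, where $\overline{M}_n$ is an empirical analog of $M(X)$, bounded with high probability thanks to $\E e^{\delta_2 M(X)}<\infty$. Propagating this through the exponential via $|e^a - e^b| \le e^{\max(a,b)}|a-b|$ yields a Lipschitz constant for $B_n$ of order $c_n\overline{M}_n\sup_\btheta B_n(\btheta)$ on a good event, and choosing $r_n$ inverse-polynomial in $c_n$ turns a union bound of the pointwise tails over the net into the required uniform estimate, with the $(c_n/r_n)^d$ cost absorbed by the hypothesis $dc_n^{2+\alpha}\log c_n/m\to 0$.

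The main obstacle is the joint balancing of $d$, $c_n$, $m$, and $\epsilon_n$ in this uniform step: the exponential factor $e^{c_n|\cdot|}$ must be controlled on a high-probability event using the sub-exponential tails from Assumptions~\ref{assum_smooth_x}--\ref{assum_smooth_theta} without inflating the dominant variance beyond $c_n^2/m$, and the slack $\epsilon_n\gg c_n^{-\alpha/2}$ built into the hypothesis is precisely what pays simultaneously for the $\epsilon_n^{-2}$ Chebyshev factor in $\eta_n'$ and the $c_n^{d}$ cost of the covering, mirroring the tight interplay already present in Theorem~\ref{thm_local_opt}.
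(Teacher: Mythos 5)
Your overall route is genuinely different from the paper's and partly more elegant. The paper bounds $D_{KL}(\pi_T\Vert\tilde\pi)$ crudely by $\sup-\inf$ of the log bias, i.e.\ by $2\sup_{\btheta}|\log U(\btheta)|+2\sup_{\btheta}c_n|\mu(\btheta)-\mu_{\btheta}|$, where $U(\btheta)=\binom{n}{m}^{-1}\sum_{I\in\sI_m}e^{c_n(\hat\mu_I(\btheta)-\mu_{\btheta})}$ is centered at the \emph{population} mean; it then invokes Lemma~\ref{lem_u_stat} for $U$ and Lemma~\ref{lem_unif_chernoff} with $m=n$ for the full-data deviation. Your exact identity $D_{KL}=\log\E_{\pi_T}[B_n]-\E_{\pi_T}[\log B_n]$ together with the Jensen observation $B_n(\btheta)\ge 1$ (which is a correct, purely combinatorial fact, since $\binom{n}{m}^{-1}\sum_I\hat\mu_I(\btheta)=\mu(\btheta)$) disposes of the lower-bound direction for free and reduces everything to a one-sided uniform bound $\sup_{\btheta}B_n(\btheta)\le 1+\epsilon_n$. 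That is a real simplification relative to the paper's two-sided control of $|\log U|$.

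The gap is in the concentration step for that one-sided bound. Your kernel $e^{c_n(\hat\mu_I(\btheta)-\mu(\btheta))}$ is centered at the \emph{empirical full-data} mean $\mu(\btheta)=\frac1n\sum_{i=1}^n\ell_i(\btheta)$, which depends on all $n$ observations. Consequently $B_n(\btheta)$ is \emph{not} a U-statistic: the summands attached to disjoint blocks of size $m$ are all coupled through $\mu(\btheta)$, so the claim that the block partition ``provides the independence needed for the variance calculation'' fails, and the Hoeffding-decomposition bound $\mathrm{Var}\le\frac{1}{\lfloor n/m\rfloor}\E\tilde g^2$ that produces the $\lfloor n/m\rfloor$ factor in $\eta_n'$ is not available as stated. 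The repair is exactly the paper's decomposition: write $\hat\mu_I(\btheta)-\mu(\btheta)=(\hat\mu_I(\btheta)-\mu_{\btheta})-(\mu(\btheta)-\mu_{\btheta})$, so that $B_n(\btheta)=e^{-c_n(\mu(\btheta)-\mu_{\btheta})}U(\btheta)$ with $U$ a genuine U-statistic whose kernel depends only on its own batch, and control $\sup_{\btheta}c_n|\mu(\btheta)-\mu_{\btheta}|$ separately via the uniform Chernoff bound applied to the full sample. A second, smaller issue: your Taylor bound $B_n-1\le\frac{c_n^2}{2}\binom{n}{m}^{-1}\sum_I(\hat\mu_I-\mu)^2e^{c_n|\hat\mu_I-\mu|}$ is fine in expectation, but turning it into the stated tail requires handling the exponential factor inside the average over all $\binom{n}{m}$ batches; the paper sidesteps this by bounding $\E|g_{\btheta}-1|$ and $\E\tilde g_{\btheta}^2$ directly through integrated tail probabilities of $c_n|\hat\mu-\mu_{\btheta}|$ rather than through a pointwise Taylor expansion, and you would need an analogous integrated-tail argument to avoid a union bound over all batches.
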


The proofs of the above theorems can be found in Appendix~\ref{sec_supp_mainproofs}. 

\begin{remark}
	\label{rem:rate}
	\mbox{}
	\begin{enumerate}
		\item 
		Both Theorems~\ref{thm_local_opt} and~\ref{thm_kl_div} require $\frac{dc_n^{2+\alpha}\log c_n}{m} \to 0$, meaning $c_n$ and $d$ need to go to infinity at a controlled rate. The convergence regime in both theorems covers a wide spectrum of batch size $m$, from $\omega(1)$ to $O(n)$.
		
		\item 
		For a given $m$, if $d$ is fixed, we can choose $c_n$ to be a value close to but smaller than $\sqrt{m}$ to make sure the temperature is not too high while the convergence holds. In Section~\ref{subsec_low_d}, we show using numerical experiments that the choice of $c_n$ is very robust in low dimensional models.
		
		\item 
		The convergence requirement has a linear dependence on $d$. If $m=n^{\gamma}$ for some fraction $\gamma$, $d$ can also go to infinity at the rate of $n$ raised to some fractional power. 
	\end{enumerate}
\end{remark}

\bk

\subsection{MHBT with stochastic gradient based proposal for neural networks}

In large-scale machine learning tasks such as training deep neural networks (DNN), the high dimensionality and complex nature of the loss function surface have posed significant challenges for designing an MCMC sampler that can i) efficiently navigate the high dimensional surface, ii) result in a reasonable acceptance probability in the MH test, and iii) be computationally feasible. Recent studies on stochastic gradient MCMC have demonstrated their potential in training DNNs \citep{chen2014, li2016, ye2017}. However, these methods are derived from continuous-time SDEs, and each discretization step introduces some error which ideally could be corrected with an MH acceptance test. Many of these methods require a shrinking learning rate in order to circumvent the MH test. In this section, we propose and analyze a stochastic gradient-based proposal with appropriate MH correction, which is computationally efficient for DNN applications.  


\subsubsection*{Proposal with Reversible Stochastic Gradient Langevin Dynamics (RSGLD)}

Our goal is to design a proposal function that can explore a complex high dimensional surface efficiently guided by gradient information. We will start by considering the proposal used in SGLD, which has been widely adopted in the literature for large-scale training tasks. Let $\hat{g}_I(\btheta) = \frac{1}{|I|} \sum_{i \in I} \nabla_{\btheta} \ell_i(\btheta)$ be the average gradient of mini-batch $I$, the proposal move for SGLD is given by
\begin{align}
	\btheta' = \btheta + \epsilon \hat{g}_I(\btheta) + \frac{\sqrt{2\epsilon}}{n} N(0,I_d),
	\label{eq_sgld}
\end{align}
where $\epsilon$ is the learning rate,  $N(0,I_d)$ is the iid Gaussian noise. Note that we have written the learning rate in a form that is consistent with the convention for SGD, so $\epsilon$ differs from the learning rate in the convention for SGLD by a factor of $n$. The original SGLD avoids the MH correction step since it is costly to compute using the full data. 

In practice, in addition to the computational efficiency issue, another difficulty arises from the acceptance probability as $d$ increases. Using~\eqref{eq_sgld} as the proposal in Algorithm~\ref{alg_mini_mh}, it can be treated as a mini-batch version of the MALA algorithm \cite{roberts1996} (and the more general Hamiltonian MCMC). It is known that in these full-batch algorithms, $\epsilon$ needs to scale like $d^{-\frac{1}{4}}n^{-1}$ to maintain a reasonable acceptance probability \citep{neal2011}. As an illustration, we consider using~\eqref{eq_sgld} as the proposal in Algorithm~\ref{alg_mini_mh} to sample from the $d$-dimensional Gaussian $N(0, I_d)$, where $d=1, 10, 10^2, 10^3$, and $n=10^4, m=1000, c_n=20$. In Figure~\ref{fig_accept_prob}(a), we computed the average acceptance probability for the first 2000 iterations initializing at the origin and then selected the largest learning rate $\epsilon$ with average acceptance probability at least 0.5 and 0.1. $\epsilon$ was chosen from a grid that scales like $d^{-\frac{1}{4}}n^{-1}$. As can be seen, $\epsilon$ quickly diminishes to below $10^{-7}$ when the dimension reaches $10^3$, if we still want to maintain a reasonable acceptance probability. Such a small learning rate results in very slow convergence and is therefore usually infeasible for practical use.

Our proposal, Reversible Stochastic Gradient Langevin Dynamics (RSGLD), is based on SGLD but enhances the acceptance probability by allowing the sampler to move in the direction of either ascending or descending gradient with an adjusted Gaussian noise. Using RSGLD as the proposal in Algorithm~\ref{alg_mini_mh} gives us a mini-batch MH algorithm that both utilizes gradient information and is computationally efficient. Our proposal modifies~\eqref{eq_sgld} in two ways: i) a coin flip decides whether the move will be in the positive or negative direction of the gradient. For convenience, we will henceforth refer to a move in the positive (or negative) gradient direction as a \textit{forward} (or \textit{backward}) step; ii) the backward step is coupled with a larger Gaussian noise.
The new state $\btheta'$ is sampled by
\begin{align}
	\btheta' = \begin{cases}
		& \btheta + \epsilon \hat{g}_I(\btheta) + \frac{\sqrt{2\epsilon}}{n} N(0,I_d), \quad \text{with probability $1/2$,}	\\
		&  \btheta - \epsilon \hat{g}_I(\btheta) + \frac{\sqrt{2\epsilon}}{n} \beta N(0,I_d), \quad \text{with probability $1/2$}
	\end{cases}
	\label{eq_proposal}
\end{align}
for some constant $\beta\geq1$. Denote this proposal $q_{I}(\btheta \to \btheta')$, then
\begin{align}
	q_{I}(\btheta \to \btheta') = \frac{1}{2} \phi \left( \btheta'-\btheta - \epsilon \hat{g}_I(\btheta) ; \frac{2\epsilon}{n^2} I_d \right) 	+  \frac{1}{2} \phi \left( \btheta'-\btheta + \epsilon \hat{g}_I(\btheta) ; \frac{2\epsilon \beta^2}{n^2} I_d \right),
\end{align}
where $\phi(\cdot; \Sigma)$ is the density of a multivariate Gaussian with zero mean and covariance matrix $\Sigma$. 

In Algorithm~\ref{alg_mini_mh}, the acceptance probability for moving from $(\btheta_t, I_t) \to (\btheta', I')$ becomes
\begin{align}
	\min \left\{ 1, \frac{q_{I'}(\btheta' \to \btheta_t) e^{c_n\hat{\mu}_{I'}(\btheta')}}{q_{I_t}(\btheta_t \to \btheta')e^{c_n\hat{\mu}_{I_t}(\btheta_t)}} \right\}.
	\label{eq_accept_prob_mas}
\end{align}
Similar to the argument in Section~\ref{subsec_minibatch_mh}, we can show using an auxiliary variable the above mini-batch MH algorithm is closely related to a tempered MCMC. We refer to Appendix~\ref{sec_supp_rsgld} for details. 

As an illustration to show both the backward step and its associated, enlarged Gaussian noise increase the acceptance probability, we used the same Gaussian setting as before (sampling from $N(0, I_d)$, where $d=10, 10^2, 10^3$, and $n=10^4, m=1000, c_n=20$) and tested $\beta=1$, which corresponds to only adding the backward move; and $\beta=2$, which increases the size of the Gaussian noise in the backward move. In Figure~\ref{fig_accept_prob}(b)-(d), we can see both adding the backward move and increasing the Gaussian noise significantly improve the acceptance probability, and the trend is consistent for different dimensions.


\begin{figure}[h!!]
	\centering
	\subfloat[]{\includegraphics[width = .48\textwidth]{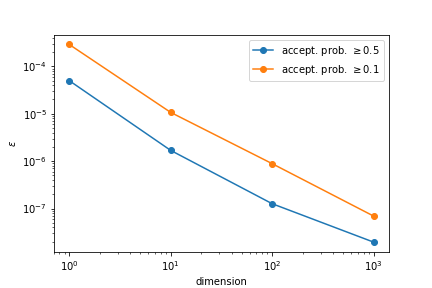}}
	\subfloat[]{\includegraphics[width = .48\textwidth]{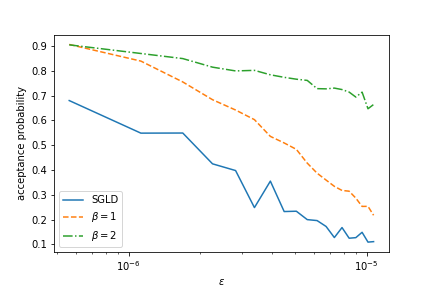}}\\
	\subfloat[]{\includegraphics[width = .48\textwidth]{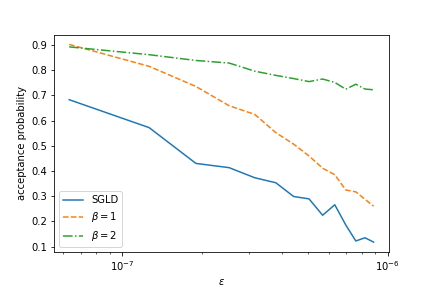}}
	\subfloat[]{\includegraphics[width = .48\textwidth]{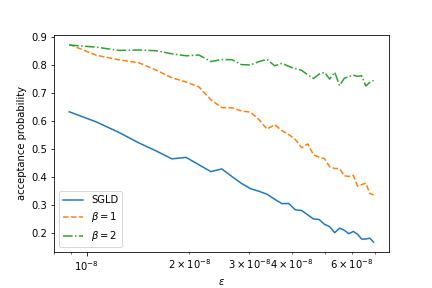}}
	\caption{(a) The largest $\epsilon$ allowed to achieve reasonable average acceptance probability on $N(0, I_d), d=1, 10, 10^2, 10^3$. (b), (c), (d), the average acceptance probability for SGLD, RSGLD ($\beta=1, 2$) for (b) $d=10$, (c) $d=10^2$, (d) $d=10^3$.}
	\label{fig_accept_prob}
\end{figure}

\subsubsection*{Analysis of acceptance probability}
In this section, we show that the RSGLD proposal leads to larger proposal ratio, thus increasing the MH ratio and acceptance probability overall. To focus on the behavior of the algorithm, we take the data $\bx$ as given and fixed, and the only randomness lies in the selection of data batch and the Gaussian perturbation. Let $\bZ\sim N(0, I_d)$ and $H_I(\btheta)$ be the Hessian matrix of $\hat{g}_I(\btheta)$ on mini-batch $I$. We assume the following conditions hold. 
\begin{assum}
	\label{assum_spec_norm}
	$\sup_{I\in\sI_m, \btheta} \Vert H_I(\btheta)\Vert_{op}\leq \lambda$ , where $\Vert \cdot \Vert_{op}$ is the operator norm. 
\end{assum}

\begin{assum}
	\label{assum_similar_grad}
	For every $\btheta$, all batches give similar gradients. More specifically, for any two batches $I$ and $J$, 
	\begin{align}
		\Vert \hat{g}_J(\btheta)-\hat{g}_I(\btheta) \Vert_2 = O(\epsilon \Vert \hat{g}_I(\btheta) \Vert_2) .
	\end{align} 
\end{assum}

\begin{proposition}
	\label{prop_accept_prob}
	For large $n$, suppose Assumptions~\ref{assum_spec_norm} and~\ref{assum_similar_grad} hold. Then depending on where the sampler is in the landscape of the target likelihood, we have the following approximations for the proposal ratio $\frac{q_{J}(\btheta' \to \btheta)}{q_{I}(\btheta \to \btheta')}$, where $\btheta$ is the current parameter value to be updated and $I$ is the current batch.
	
	\textbf{Case 1)}. Assume there exists a small constant $\eta_0$ such that $\Vert \bZ \Vert_2 \leq \eta_0 \cdot n \sqrt{\epsilon/2} \Vert \hat{g}_I(\btheta)\Vert_2 $ with high probability (i.e. with probability approaching 1), and the learning rate $\epsilon$ is small enough such that $\frac{n^2\epsilon^2(\epsilon\vee\eta_0)}{\beta^2-1} \Vert \hat{g}_I(\btheta) \Vert_2^2  = o(d)$ for large $d$, $\beta>1$. Then
	\begin{itemize}
		\item
		if the update in~\eqref{eq_proposal} results in a forward move, we have $\frac{q_{J}(\btheta' \to \btheta)}{q_{I}(\btheta \to \btheta')} > 1$ with high probability. 
		
		\item
		if the update in~\eqref{eq_proposal} results in a backward move, we have $\frac{q_{J}(\btheta' \to \btheta)}{q_{I}(\btheta \to \btheta')}=o_P(1)$. 
	\end{itemize}
	
	\textbf{Case 2)}. Assume $\Vert \hat{g}_I(\btheta) \Vert_2= 0$, and the learning rate $\epsilon$ is small enough such that $\epsilon = o(d^{-1})$ for large $d$. Then we have $\frac{q_{J}(\btheta' \to \btheta)}{q_{I}(\btheta \to \btheta')} = 1+o_P(1)$ for both directions in~\eqref{eq_proposal}. 
	
\end{proposition}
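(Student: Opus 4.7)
My plan is to compute the proposal ratio $q_J(\btheta'\to\btheta)/q_I(\btheta\to\btheta')$ by writing each density as a mixture $\tfrac{1}{2}(\text{forward branch})+\tfrac{1}{2}(\text{backward branch})$ of two Gaussians, identifying the dominant branch of each under the regime at hand, and then taking the ratio of these dominant parts with all sub-dominant corrections absorbed using Assumptions~\ref{assum_spec_norm} and~\ref{assum_similar_grad} together with the stated scaling hypothesis. I set $\sigma_F^2 = 2\epsilon/n^2$ and $\sigma_B^2 = \beta^2\sigma_F^2$ throughout for brevity.

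For Case~1 with a forward move, $\btheta'-\btheta = \epsilon\hat{g}_I(\btheta) + \sigma_F\bZ$ with $\bZ\sim N(0,I_d)$. In the denominator the forward branch evaluates to $(2\pi\sigma_F^2)^{-d/2}e^{-\|\bZ\|^2/2}$; for the backward branch, expanding $\|2\epsilon\hat{g}_I+\sigma_F\bZ\|^2$ and using Cauchy--Schwarz together with $\|\bZ\|_2 \leq \eta_0 n\sqrt{\epsilon/2}\|\hat{g}_I\|_2$ reveals an exponent $-n^2\epsilon\|\hat{g}_I\|_2^2/\beta^2\,(1+O(\eta_0))$ that dwarfs the forward branch, so $q_I(\btheta\to\btheta')\approx\tfrac{1}{2}(2\pi\sigma_F^2)^{-d/2}e^{-\|\bZ\|^2/2}$. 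For the numerator I invoke Assumption~\ref{assum_spec_norm} to Taylor-expand $\hat{g}_J(\btheta') = \hat{g}_J(\btheta) + H_J(\tilde\btheta)(\btheta'-\btheta)$ and Assumption~\ref{assum_similar_grad} to replace $\hat{g}_J(\btheta)$ by $\hat{g}_I(\btheta)$, obtaining $\hat{g}_J(\btheta') = \hat{g}_I(\btheta)+\delta$ with $\|\delta\|_2 = O(\epsilon\|\hat{g}_I\|_2 + \sqrt{\epsilon}\|\bZ\|_2/n)$. The forward branch of $q_J$ then carries exponent $\approx -n^2\epsilon\|\hat{g}_I\|_2^2$ (negligible), while its backward branch has argument $-\sigma_F\bZ+\epsilon\delta$ and exponent $\approx -\|\bZ\|^2/(2\beta^2)$. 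Hence $q_J(\btheta'\to\btheta)\approx\tfrac{1}{2}(2\pi\sigma_B^2)^{-d/2}e^{-\|\bZ\|^2/(2\beta^2)}$, and the ratio of dominant parts is $\beta^{-d}\exp\bigl(\|\bZ\|^2(\beta^2-1)/(2\beta^2)\bigr)$ times a $(1+o(1))$ factor. The condition $\frac{n^2\epsilon^2(\epsilon\vee\eta_0)}{\beta^2-1}\|\hat{g}_I\|_2^2 = o(d)$ will be used to absorb the $(1+o(1))$ factor, from which the $>1$ claim follows via the concentration of $\|\bZ\|^2$.

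For the backward-move sub-case of Case~1 the roles of the two branches swap: the backward branch now dominates the denominator (giving $(2\pi\sigma_B^2)^{-d/2}e^{-\|\bZ\|^2/2}$) while the forward branch dominates the numerator (with exponent $\approx -\beta^2\|\bZ\|^2/2$), so the ratio becomes $\beta^d\exp\bigl(-\|\bZ\|^2(\beta^2-1)/2\bigr)$, which is $e^{-\Omega(d)}=o_P(1)$ for $\beta$ bounded away from $1$ since $\|\bZ\|^2 = d+O_P(\sqrt{d})$. For Case~2, where $\hat{g}_I(\btheta)=0$, Assumption~\ref{assum_similar_grad} also forces $\hat{g}_J(\btheta)=0$, and Assumption~\ref{assum_spec_norm} with Taylor gives $\|\hat{g}_J(\btheta')\|_2\leq\lambda\sigma\|\bZ\|_2$ for $\sigma\in\{\sigma_F,\sigma_B\}$. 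Each branch of $q_J(\btheta'\to\btheta)$ then differs from the corresponding branch of $q_I(\btheta\to\btheta')$ only by an $\exp(O(\epsilon\|\bZ\|^2))$ quadratic-form correction, which under $\epsilon=o(d^{-1})$ and $\|\bZ\|^2=O_P(d)$ is $1+o_P(1)$ uniformly across the two branches, so the ratio of the full mixtures is $1+o_P(1)$ for either direction of~\eqref{eq_proposal}.

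The main obstacle is the bookkeeping of sub-dominant corrections. The awkward ones are the cross term $n\sqrt{\epsilon}\langle\hat{g}_I,\bZ\rangle/\beta^2$ in the backward branch of the denominator, and $n\sqrt{\epsilon}\langle\bZ,\delta\rangle/\beta^2$ together with $n^2\epsilon\|\delta\|^2/\beta^2$ in the backward branch of the numerator. After substituting the bound on $\|\delta\|_2$ and invoking $\|\bZ\|_2\leq\eta_0 n\sqrt{\epsilon/2}\|\hat{g}_I\|_2$, each of these reduces to a multiple of $n^2\epsilon^2(\epsilon\vee\eta_0)\|\hat{g}_I\|_2^2/(\beta^2-1)$ (times a bounded function of $\beta$ and $\lambda$), which is precisely what the scaling hypothesis forces to be $o(d)$. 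Matching every error term against this hypothesis and verifying that the dominant-branch identification in both the numerator and the denominator is robust to these corrections is the fiddly step of the argument.
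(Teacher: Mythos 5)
Your overall strategy --- writing each proposal density as a two-component Gaussian mixture, identifying the dominant branch of numerator and denominator in each regime, and charging every correction term to the hypothesis $\frac{n^2\epsilon^2(\epsilon\vee\eta_0)}{\beta^2-1}\Vert\hat{g}_I(\btheta)\Vert_2^2=o(d)$ --- is the same as the paper's, and your branch identification (forward branch dominates $q_I(\btheta\to\btheta')$, backward branch dominates $q_J(\btheta'\to\btheta)$ for a forward move in Case 1), your treatment of the backward move, and your Case 2 argument all line up with the computation in Appendix~\ref{sec_supp_acceptprob}.

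However, the final step of Case 1 for a forward move does not go through as written. You retain the determinant factors $(2\pi\sigma_F^2)^{-d/2}$ and $(2\pi\sigma_B^2)^{-d/2}=\beta^{-d}(2\pi\sigma_F^2)^{-d/2}$ and arrive at the dominant-part ratio $\beta^{-d}\exp\bigl(\Vert\bZ\Vert_2^2(\beta^2-1)/(2\beta^2)\bigr)(1+o(1))$, and then assert that the $>1$ conclusion ``follows via the concentration of $\Vert\bZ\Vert^2$.'' It does not: with $\Vert\bZ\Vert_2^2=d(1+o_P(1))$ the logarithm of your expression is $d\bigl(\tfrac{\beta^2-1}{2\beta^2}-\log\beta\bigr)+o_P(d)$, and $\tfrac{\beta^2-1}{2\beta^2}=\tfrac12(1-\beta^{-2})<\log\beta$ for every $\beta>1$ (put $u=\beta^2$ and use $1-1/u<\log u$), so your expression is $e^{-\Omega_P(d)}=o_P(1)$ --- the opposite of the claimed bound. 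The scaling hypothesis only controls the cross terms of order $n^2\epsilon^2(\epsilon\vee\eta_0)\Vert\hat{g}_I\Vert_2^2/\beta^2$; it cannot absorb the $-d\log\beta$ contributed by the normalization, which is of the same order $d$ as, and strictly larger than, the favorable term $\tfrac12(1-\beta^{-2})\Vert\bZ\Vert_2^2$. The paper's own derivation writes the ratio in~\eqref{eq_proposal_ratio} purely in terms of the exponentials, i.e.\ with the $\beta^{\pm d}$ determinant factors cancelled, and only then does the lower bound $\exp\{-Cn^2\epsilon^2(\epsilon\vee\eta_0)\Vert\hat{g}_I\Vert_2^2/\beta^2+\tfrac12(1-1/\beta^2)\Vert\bZ\Vert_2^2\}>1$ become available; as it stands, your write-up of this sub-case contradicts your own intermediate formula and must either justify that cancellation or repair the final inequality. (Your backward-move expression $\beta^{d}\exp(-\Vert\bZ\Vert_2^2(\beta^2-1)/2)$ happens to remain $o_P(1)$ because $\log\beta<(\beta^2-1)/2$, and in Case 2 the determinants cancel branch-by-branch, so those parts are unaffected.)
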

We defer the proof to Appendix~\ref{sec_supp_acceptprob}. 
\begin{remark}
	\mbox{}
	\begin{enumerate}
		\item 
		In this proposition, we consider the behavior of the proposal ratio in different regions of the landscape. The condition in Case 1) means the sampler is at a location where gradient information is strong. Simple rearranging in~\eqref{eq_proposal} shows in this case, the gradient part dominates the Gaussian noise.  In Case 2), the sampler has reached a flat region of the landscape.
		
		\item 
		If $\|\hat{g}_I(\btheta)\|_2=O(\sqrt{d})$, in Case 1) $\epsilon$ needs to satisfy $\epsilon\ll n^{-1} \sqrt{\beta^2-1}/\sqrt{\eta_0}$, the rate of which no longer depends on $d$ and scales better than before ($d^{-\frac{1}{4}}n^{-1}$). Sparse $\hat{g}_I(\btheta)$ (such as in typical neural networks) and large $\beta$ can allow for even larger learning rates.
		
		\item 
		The result in Case 1) implies it is more likely for the MH step to accept a forward move than a backward move when the gradient is strong. This is a desirable property in optimization tasks for maintaining efficiency. In particular, the proposal ratio is lower bounded by 1 in the forward direction and hence will no longer shrink the overall MH ratio to zero.  In Case 2), the proposal in the sampler behaves like a random walk if the learning rate is sufficiently small.
	\end{enumerate}
\end{remark}

\section{Experiments}
\label{sec:exp}
\subsection{Distributions in low dimensions}
\label{subsec_low_d}
\subsubsection*{Convergence to known posterior}

We first examined the convergence behavior of MHBT compared to the conventional MCMC sampler using the full dataset (termed full batch MCMC). As the analysis in Section~\ref{subsec_theory} suggests, MHBT converges to a tempered version of the original posterior distribution. In order to explicitly measure the distance from this posterior, we considered $d$-dimensional ($d=2$ and 5) Gaussian distributions with unknown mean $\btheta$, known covariance $I_d$, where the prior of $\btheta$ was set to be $N(0, I_d)$. We generated $n=10^5$ samples from this distribution with each true $\theta_i^*=2$. It follows then the posterior of $\btheta$ given the data $\bx$ is $N\left( \frac{n}{n+1}\bar{\bx}, (n+1)^{-1}I_d \right)$, where $\bar{\bx}$ is the sample average. Raising the posterior to temperature $T$ changes the variance to $\frac{T}{n+1} I_d $. Mini-batch sampling was performed with Algorithm~\ref{alg_mini_mh}, setting the proposal $q(\cdot)$ as a Gaussian random walk with step size $\delta$ and mini-batch size $m=1000$. Full batch MCMC was performed on the tempered posterior also with the same type of random walk proposal. The same step size $\delta$ was chosen for both algorithms and the average acceptance probability was around 0.3.

\begin{figure}[h!]
	\centering
	\subfloat[]{\includegraphics[width = .33\textwidth]{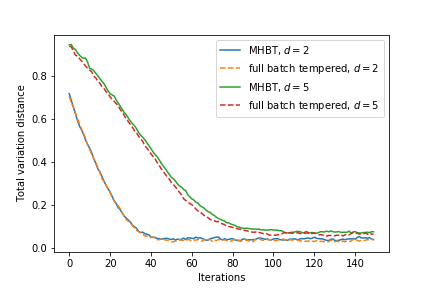}}
	\subfloat[]{\includegraphics[width=.33\textwidth]{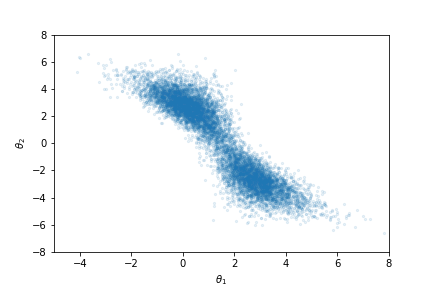}}
	\subfloat[]{\includegraphics[width=.33\textwidth]{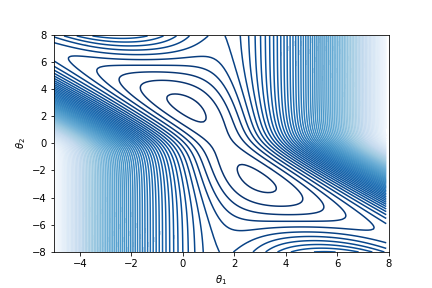}}\\
	\subfloat[]{\includegraphics[width = .33\textwidth]{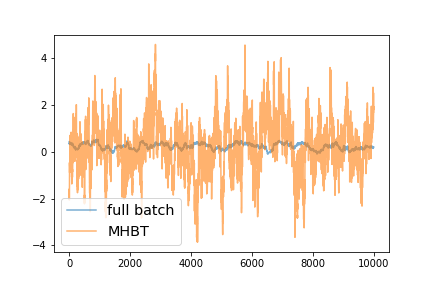}}
	\subfloat[]{\includegraphics[width = .33\textwidth]{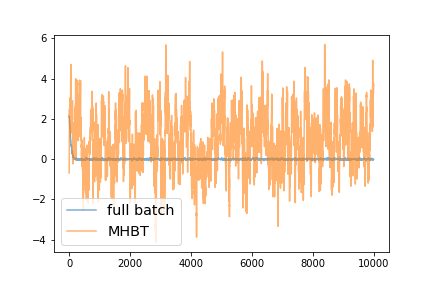}}
	\subfloat[]{\includegraphics[width = .33\textwidth]{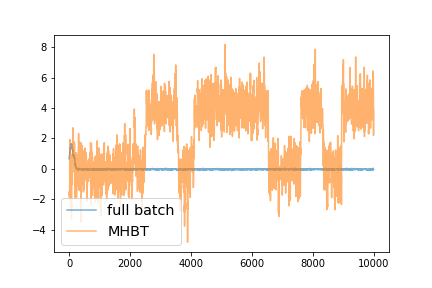}}
	\caption{(a) Total variation distance between the sampled distribution and true tempered posterior for $d$-dimensional Gaussian. $d=2,5$. (b), (c) Scatter plot of sampled $\btheta$ values vs. contour plot of the tempered log posterior; (d), (e), (f) trajectories of MHBT and full batch MCMC for the 2-component Gaussian mixture model with fixed $\theta_1=0$, and $\theta_2=0.5, 2, 4$ respectively.}
	\label{fig_gauss_meta}
\end{figure}

Figure~\ref{fig_gauss_meta}(a) shows the total variation distance between the sampled distributions and true tempered posterior for the two MCMC algorithms on $d$-dimensional Gaussian, as the number of iterations increases. The distance was calculated by running $10^5$ independent MCMC chains and taking the same number of independent samples from the tempered distribution, followed by discretization to group the values into $d$-dimensional histograms.
The results shown correspond to $c_n=20$, which is smaller than $\sqrt{m}$ as discussed in Remark~\ref{rem:rate}, although we note that a range of $c_n$ values (5-30) led to very similar results. For both $d=2$ and 5, MHBT converges at a rate almost identical to full batch MCMC to the tempered posterior.

\subsubsection*{Gaussian mixture}

To illustrate the tempering effect of MHBT and examine the accuracy of the approximation in Section~\ref{subsec_theory}, we consider an example in \cite{welling2011}.  We generated $n=10^5$ samples from a 2-component mixture Gaussian model with parameters $\btheta=(\theta_1, \theta_2)$ following:
\begin{align*}
	\btheta \sim N(0, \text{diag}(\sigma_1^2, \sigma_2^2)), \qquad X_i  \sim 0.5 N(\theta_1, \sigma_x^2) + 0.5 N(\theta_1+\theta_2, \sigma_x^2),
\end{align*}
where $\sigma_x^2=2$, $\sigma_1^2=10$,  $\sigma_2^2=1$. The posterior distribution of $\btheta$ given $\bx=(x_1, \dots, x_n)$ can be calculated explicitly as 
\begin{align*}
	\pi(\btheta) \propto & e^{-\frac{1}{2} \left( \theta_1^2/\sigma_1^2 + \theta_2^2/\sigma_2^2\right)} \prod_{i=1}^{n} \left( e^{-\frac{1}{4}(\theta_1^2-2\theta_1 x_i)} + e^{-\frac{1}{4}((\theta_1+\theta_2)^2-2(\theta_1+\theta_2) x_i)}  \right).
\end{align*}
We sampled $\btheta$ using Algorithm~\ref{alg_mini_mh}, where the proposal $q(\cdot)$ is the Gaussian random walk with step size $\delta$.  We set the mini-batch size $m$ to $1000$. There remain two tuning parameters in the algorithm: $c_n$ and $\delta$. We chose $c_n$ to be 20 and $\delta$ such that the average acceptance probability was around 0.3. Very similar results can be obtained by a range of $c_n$ values (e.g. 5-30).

Figure~\ref{fig_gauss_meta}(b)-(c) show the sampled $\btheta$ from $10^5$ iterations and the contour plot of the tempered log posterior,  $\log \pi_T(\btheta) \propto 1/T \log \pi(\btheta)$. We can see that the two modes in these plots coincide well. 

Figure~\ref{fig_gauss_meta}(d)-(f) compare the trajectory of MHBT with that of the full batch MCMC in one of the two dimensions. The latter sampling was performed on the original posterior distribution, and the step size of the random walk was chosen so that the average probability was around 0.3. We fixed $\theta_1=0$ and increased $\theta_2$ from 0.5 to 4 so that the two modes in the posterior distribution became increasingly separated. In each case, MHBT is capable of visiting the two modes of the distribution whereas the full batch MCMC is trapped in one of the modes. This highlights the effect of tempering brought about by the mini-batch algorithm, which makes the landscape smoother and easier for the sampler to travel.

\subsection{Neural networks}
\subsubsection*{Fully connected neural networks}

We first tested MHBT with RSGLD on the standard MNIST handwritten digit classification task. The dataset was loaded directly from TensorFlow tutorial and consists of 55,000 instances for training and 10,000 instances for testing. We considered a neural network containing one hidden layer with 600 nodes and ReLU activation function ($\sim4\times 10^5$ parameters). The outputs from the layer are connected to a 10-class softmax layer for classification. In this case, the log likelihood function is the negative of the cross entropy loss. The batch size was set to 100. We compared the performance of our method with a number of popular optimization methods in the neural network literature for a range of learning rates. In each training, we started RSGLD with a large $\beta$ to initiate the moves and gradually decreased it as the training progressed. 

\textbf{\textit{Choosing $\beta$.}}
Throughout training, we monitored the overall acceptance probability for each epoch, where by convention one epoch equals the total number of iterations it takes to step through the whole training dataset (in this case $55000/100=550$ iterations). We decreased $\beta$ according to the following adjustment phase once the acceptance probability became larger than 0.4 at the end of each epoch. During the adjustment phase, we ran 100 forward steps using the current parameter values and computed the MH acceptance probability. If the average probability of these forward steps exceeded 0.7, we decreased $\beta$ by 5\%. The maximum reduction allowed in each  adjustment phase was 50\%. The next epoch of training was then run with the new $\beta$ value. On the other hand, when the average probability for one epoch dropped below 0.2, we increased $\beta$ by 5\%. We observed that in all experiments, $\beta$ eventually stabilized to some constant slightly larger than 1. 

\textbf{\textit{Comparison with other methods.}}
We performed extensive comparison with SGD and SGLD using various learning rates and multiple rounds of training to assess the stability of each method. Each round of training lasted $2.75\times10^5$ iterations (500 epochs), and all the parameters were initialized with independent $N(0,0.03)$ distribution. The same batch size (100) was used for all the methods. In this high dimensional setting, we explored a range of $c_n$ values around the batch size and show results using $c_n=100$. We additionally tested $c_n=50, 200$ under the same settings; the results are very similar thus omitted. 

Table~\ref{tab_testing_error_mnist} shows the prediction errors of the three methods on the testing set, using the top class from the softmax layer as the predicted label. Each number is the median error obtained from 30 training rounds with the corresponding standard deviation shown in parentheses. Overall, the performance of RSGLD improves with large learning rate and eventually achieves better accuracy (smaller error) than that attainable by SGD or SGLD at any learning rate. RSGLD shows substantially better stability for large learning rate than the other two methods. In particular, when the learning rate is 0.2 or larger, SGD and SGLD can fail to converge completely for a significant fraction of the training rounds, which explains the large standard deviations. In general, the standard deviation of errors increases with the learning rate for all the methods, showing stability is hard to achieve with a large learning rate although it can lead to faster convergence and potentially better prediction. As explained in \cite{xing2018}, using a large learning rate can help algorithms maintain a trajectory high from the valley floor and more easily overcome energy barriers as they explore the loss surface with stochastic gradients. In this sense, the stability of RSGLD under large learning rates is beneficial for training DNNs. We also observe that in all the experiments, the backward step in RSGLD was much less likely to be accepted compared to the forward step, which is discussed in Case 1) of Proposition~\ref{prop_accept_prob} and is desirable for optimization efficiency. Since the forward step is identical to SGLD, this suggests a main reason for improvement offered by RSGLD lies in the algorithm being able to select a more efficient trajectory through the parameter space via the MH correction step. 

\begin{table*}[ht]
	\centering 
	\begin{tabular}{c @{\hspace{0.5\tabcolsep}} c @{\hspace{0.5\tabcolsep}} c @{\hspace{0.5\tabcolsep}} c @{\hspace{0.5\tabcolsep}} c @{\hspace{0.5\tabcolsep}} c }
		& \multicolumn{5}{c}{Top class prediction error (\%) on the testing set}	\\
		\hline\hline
		$\epsilon$ & 0.01 & 0.02 & 0.05 & 0.08 & 0.1 \\
		\hline
		RSGLD & $2.01 (0.03)$ & $1.82 (0.03)$ & $1.72 (0.03)$ & $1.75 (0.03)$ & $1.73 (0.04)$ \\
		SGD	 & $1.81 (0.02)$ & $1.78 (0.02)$ & $1.73 (0.03)$ & $1.75 (0.03)$ & $1.75 (0.06)$ \\
		SGLD & $1.81 (0.02)$ & $1.78 (0.02)$ & $1.73 (0.03)$  & $1.72 (0.03)$ & $1.75 (0.07)$ \\
		\hline
		& 0.2 & 0.3 & 0.4 & 0.5 & 0.6 \\
		RSGLD & $1.75 (0.13)$ & $1.7 (0.18)$ &	$1.68 (0.33)$ & $1.66 (11.4)$ & $1.71 (26.9)$ \\
		SGD &  $1.76 (16.1)$ & $1.85 (42.1)$ & $89.7 (44.4)$ & $89.7 (33.4)$ &	$89.8 (24.0)$	\\
		SGLD & $1.8 (33.3)$ & $1.84 (42.1)$ & $89.7 (43.3)$ & $89.9 (26.9)$ & $89.8 (37.9)$
	\end{tabular}
	\caption{MNIST top class prediction error (\%) on the testing set using 30 training rounds for each learning rate. Each number is the median error with standard deviation in parentheses.}
	\label{tab_testing_error_mnist}
\end{table*}

In addition to checking the average performance of the methods from multiple training rounds, we also examine the lowest prediction error achieved under each learning rate from 30 rounds of training. Since SGD and SGLD did not converge most of the time under large learning rates, showing the average or median error would make the plot scale badly. Fig~\ref{fig_mnist_error}(a) shows a trend similar to Table~\ref{tab_testing_error_mnist} with RSGLD outperforming the other two methods for large learning rates. Overall the lowest error is achieved by RSGLD with learning rate around 0.4-0.5. Examples of detailed testing error trajectories for various methods are shown in Fig~\ref{fig_mnist_error}(b), where for each method we selected the learning rate with the best performance. We have further included RMSprop \cite{tieleman2012} with learning rate 0.005 and Adam \cite{kingma2014} with learning rate 0.001 for comparison. The learning rate was chosen by optimization via grid search for these two methods.

\begin{figure}[h!!]
	\centering
	\subfloat[]{\includegraphics[width = .48\textwidth]{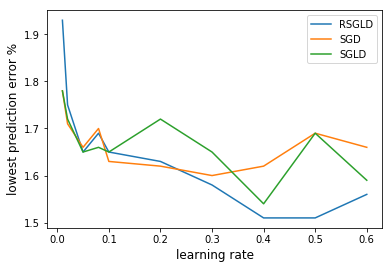}}
	\subfloat[]{\includegraphics[width = .48\textwidth]{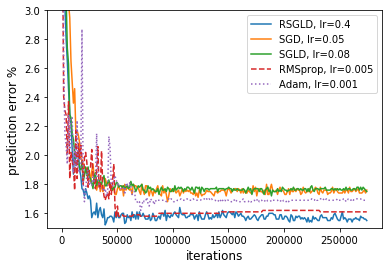}}
	\caption{(a) Lowest error rate in \% achieved by the three methods out of 30 training rounds using various learning rates.(b) Examples of testing error trajectories using different training methods.}
	\label{fig_mnist_error}
\end{figure}

\subsubsection*{Convolutional neural networks (CNN)}
We next tested a standard three-layer CNN on the CIFAR-10 RGB image dataset \citep{krizhevsky2009}, the detailed architecture of which is listed in Appendix Table~\ref{tab_cnn_structure_cifar10}. The network has around $4\times10^6$ parameters. The dataset consists of 60000 $32\times32$ RGB images in 10 classes, with 50000 for training and 10000 for testing. All parameters were initialized independently with $N(0,0.02)$ distribution. The same batch size and $c_n$ were used, and the same schedule was used for decreasing $\beta$ as in the last example. Similar to the comparison performed on MNIST, we used 20 rounds of independent training for each learning rate to check the accuracy and stability of RSGLD, SGD and SGLD, with each round lasting for $10^5$ iterations. As shown in Table~\ref{tab_testing_error_cifar10}, RSGLD consistently outperformed the other two methods and the margin of difference becomes larger as the learning rate increases. 

\begin{table}[h!]
	\centering 
	\begin{tabular}{c @{\hspace{1\tabcolsep}} c @{\hspace{1\tabcolsep}} c @{\hspace{1\tabcolsep}}  c@{\hspace{1\tabcolsep}}  c @{\hspace{1\tabcolsep}} }
		& \multicolumn{4}{c}{Top class prediction error (\%) on the testing set}	\\
		\hline\hline
		$\epsilon$ & 0.005 & 0.008 & 0.02 & 0.04  \\
		\hline
		RSGLD & 26.93/26.29 & 26.81/26.01 & 26.95/26.36 & 27.34/26.75 	\\
		SGD	 & 27.03/26.55 & 27.03/26.43 & 27.27/26.85 & 27.85/27.14 	\\
		SGLD & 27.00/26.60 & 26.88/26.19 & 27.31/26.70 & 27.85/27.08 	\\
	\end{tabular}
	\caption{CIFAR-10 top class prediction error (\%) on the testing set using 20 training rounds for each learning rate. The numbers shown are the median/lowest errors out of 20 rounds.}
	\label{tab_testing_error_cifar10}
\end{table}

\section{Conclusion}

In this paper, we study an efficient MH-MCMC algorithm which uses mini-batches of data. We draw connections between the stationary distribution of this Markov chain and the tempered posterior, and provide the approximation errors for a general class of likelihood functions. We also propose RSGLD, a stochastic gradient based proposal to help the sampler navigate complex high dimensional surface with reasonable acceptance probability in the MH acceptance test. Empirically, we demonstrate the algorithm has good convergence behavior and the tempering effect helps move between well separated modes in classical low dimensional models. We demonstrate the efficacy of RSGLD in training neural networks with the MNIST and CIFAR-10 datasets and show that compared to popular optimization methods, we achieve improved accuracy and stability when the learning rate is large. 

\appendixtitleon
\appendixtitletocon
\begin{appendices}
\section{Proofs of the main theorems}
\label{sec_supp_mainproofs}
In this section, we first prove Theorems~\ref{thm_local_opt} and~\ref{thm_kl_div} in the main paper. 
We start the analysis by first showing two concentration lemmas. For brevity, we will write $\E_{\btheta^*}(\cdot)$ as $\E(\cdot)$. $C, C_1, \dots$ are general constants and might be different in every appearance.

\begin{lem}
Let $c_n$ be a sequence going to infinity such that $c_n^{2}/m \to 0$ and $\Omega\subset\Theta$ be a bounded subset. Under Assumptions~\ref{assum_smooth_x}-\ref{assum_smooth_theta}, 
\begin{align}
P\left( \sup_{\btheta \in \Omega} c_n |\hat{\mu}(\btheta)-\mu_{\btheta} | > t\right) \leq & \begin{cases}
r_n^{-d} e^{-\frac{mt^2}{16C_1(\btheta^*,\delta_1)c_n^2}} \quad & \text{ if } d_n \leq t\leq 2\delta_1 C_1(\btheta^*, \delta_1) c_n/L_0,	\\
r_n^{-d} e^{-\frac{\delta_1 mt}{8c_n L_0}} \quad & \text{ if } t > 2\delta_1 C_1(\btheta^*, \delta_1) c_n/L_0. 
\end{cases}
\label{eq_uniform_theta}
\end{align}
where $C_1$ is a constant depending only on $\btheta^*$ and $\delta_1$, $r_n\asymp c_n^{-1}d_n$, $d_n \to 0$.
\label{lem_unif_chernoff}
\end{lem}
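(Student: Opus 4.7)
The plan is to combine a pointwise sub-exponential Bernstein bound (from Assumption~\ref{assum_smooth_x}) with a covering argument over $\btheta$ (from Assumption~\ref{assum_smooth_theta}), and then take a union bound over the net. The $r_n^{-d}$ prefactor is the size of the $\ell_1$-net, and the two regimes in the claimed bound correspond to the Gaussian and exponential branches of Bernstein.

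\textbf{Pointwise concentration.} Fix $\btheta\in\Omega$ and consider the iid centered variables $Z_j=\ell_j(\btheta)-\mu_{\btheta}$. Writing $\ell_j(\btheta)=\log p_{\btheta}(X_j)$ as $\log p_{\btheta}(x_0)$ plus an increment bounded by $L_0\Vert\mathcal{T}(X_j)-\mathcal{T}(x_0)\Vert_1$ (Assumption~\ref{assum_smooth_x}), and using $\E e^{\delta_1\Vert\mathcal{T}(X)\Vert_1}<\infty$, I would show $Z_j$ is sub-exponential with a variance proxy $v=C_1(\btheta^*,\delta_1)$ and a tail parameter $K\asymp L_0/\delta_1$, both independent of $\btheta\in\Omega$. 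Standard Bernstein for iid sub-exponential averages then gives, for any $s>0$,
\[
P\bigl(|\hat{\mu}(\btheta)-\mu_{\btheta}|>s\bigr)\leq 2\exp\!\left(-\tfrac{m}{2}\min\!\bigl(\tfrac{s^2}{v},\tfrac{s}{K}\bigr)\right).
\]
Substituting $s=t/(2c_n)$ (reserving half of the slack for the discretization term below) matches the two regimes in the lemma up to absorbed constants: the Gaussian regime dominates when $t\lesssim c_n\delta_1 C_1/L_0$ and the exponential regime dominates otherwise.

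\textbf{Covering and discretization.} Since $\Omega\subset\R^d$ is bounded, for each $r_n>0$ it admits an $r_n$-net $\mathcal{N}\subset\Omega$ in $\ell_1$-norm with $|\mathcal{N}|\leq (CR/r_n)^d\asymp r_n^{-d}$. For any $\btheta\in\Omega$ with nearest net point $\tilde{\btheta}$, Assumption~\ref{assum_smooth_theta} gives $|\ell_j(\btheta)-\ell_j(\tilde{\btheta})|\leq M(X_j)r_n$, so
\[
|\hat{\mu}(\btheta)-\mu_{\btheta}|\leq|\hat{\mu}(\tilde{\btheta})-\mu_{\tilde{\btheta}}|+r_n\bigl(\tfrac{1}{m}\textstyle\sum_j M(X_j)+\E M(X)\bigr).
\]
The MGF condition $\E e^{\delta_2 M(X)}<\infty$ lets me bound $\frac{1}{m}\sum_j M(X_j)\leq 2\E M(X)$ with probability $1-o(1)$ by another Bernstein step, so on that event $c_n|\hat{\mu}(\btheta)-\mu_{\btheta}|\leq c_n|\hat{\mu}(\tilde{\btheta})-\mu_{\tilde{\btheta}}|+3c_nr_n\E M(X)$. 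Choosing $r_n\asymp d_n/c_n$ with the constant small enough makes the discretization contribution at most $d_n/2$, justifying both the condition $t\geq d_n$ and the claimed $r_n\asymp c_n^{-1}d_n$.

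\textbf{Union bound and conclusion.} For $t\geq d_n$, on the good event above the event $\{\sup_{\btheta\in\Omega}c_n|\hat{\mu}(\btheta)-\mu_{\btheta}|>t\}$ is contained in $\{\sup_{\tilde{\btheta}\in\mathcal{N}}c_n|\hat{\mu}(\tilde{\btheta})-\mu_{\tilde{\btheta}}|>t/2\}$. A union bound over $\mathcal{N}$ multiplies the pointwise Bernstein tail by $|\mathcal{N}|\asymp r_n^{-d}$, which yields exactly the two-case bound in~\eqref{eq_uniform_theta}.

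The main technical obstacle is verifying that the sub-exponential parameters $v$ and $K$ of $Z_j$ can be chosen uniformly in $\btheta\in\Omega$ and depend only on $\btheta^*$ and $\delta_1$ (not on $\btheta$). The difficulty is that Assumption~\ref{assum_smooth_x} is Lipschitz in $x$ only, not in $\btheta$, so I need to use a fixed reference $x_0$ and then absorb the $\btheta$-dependent constant $\log p_{\btheta}(x_0)$ through the centering; the boundedness of $\Omega$ together with Assumption~\ref{assum_smooth_theta} applied at $x_0$ controls this centering term. Everything else is a routine combination of Bernstein and covering, and the bad-event probability from controlling $\frac{1}{m}\sum_j M(X_j)$ is of smaller order than the stated bound and can be folded in.
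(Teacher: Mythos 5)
Your overall architecture -- a uniform-in-$\btheta$ Bernstein/Chernoff bound at the net points, an $\ell_1$-covering of $\Omega$ of cardinality $\asymp r_n^{-d}$, control of the oscillation within each ball via $M(X_j)$, and a union bound -- is exactly the paper's. Your pointwise step is a legitimate variant: the paper gets uniform sub-exponential parameters by symmetrizing against an independent copy $(Y_i)$ and applying Assumption~\ref{assum_smooth_x} to $\log p_{\btheta}(X_i)-\log p_{\btheta}(Y_i)$, whereas you anchor at a fixed $x_0$ and let the centering cancel the $\btheta$-dependent constant $\log p_{\btheta}(x_0)$; both yield an MGF bound $\E e^{s\sum_i\tilde\ell_{\btheta}(X_i)}\leq e^{s^2mC_1/2}$ for $s\lesssim\delta_1/L_0$ with $C_1$ depending only on $\btheta^*,\delta_1$, and both then optimize $s$ in two regimes to produce the Gaussian and exponential branches.

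There is, however, one step that fails as written: your treatment of the discretization error. You truncate $\frac{1}{m}\sum_j M(X_j)$ at the fixed level $2\E M(X)$ and assert that the resulting bad-event probability ``is of smaller order than the stated bound and can be folded in.'' That bad-event probability is a quantity of order $e^{-cm}$ that does not depend on $t$, while the claimed bound in the regime $t>2\delta_1 C_1 c_n/L_0$ is $r_n^{-d}e^{-\delta_1 mt/(8c_nL_0)}$, which tends to $0$ as $t\to\infty$. For $t$ large enough the additive bad-event term dominates, so your argument does not deliver the stated inequality over the whole range of $t$ -- and the whole range genuinely matters downstream, since Lemma~\ref{lem_u_stat} integrates these tail probabilities over $t$ up to infinity. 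The paper sidesteps this by never leaving the exponential-moment world: it bounds $\E\exp\{2s\sum_i\sup_{\btheta\in\mathcal{B}_1(\btheta_j,\epsilon_n)}|\tilde\ell_{\btheta}(X_i)-\tilde\ell_{\btheta_j}(X_i)|\}$ directly via $\E e^{\delta_2 M(X)}<\infty$ and combines it with the net-point MGF by Cauchy--Schwarz, so the oscillation contributes only a factor $e^{2sm\epsilon_nC_2/\delta_2}$ that is absorbed by halving $t$ inside a single Chernoff bound retaining full $t$-decay. Your scheme is repairable within its own logic -- split $P(A+B>t)\leq P(A>t/2)+P(B>t/2)$ with a threshold for $\frac{1}{m}\sum_jM(X_j)$ that scales like $t/(c_nr_n)$ rather than a fixed multiple of $\E M(X)$, which again gives a tail decaying exponentially in $mt/c_n$ -- but as stated the folding-in claim is false and the step needs this fix.
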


\begin{proof}
We first consider fixed $\btheta$. Let $\tilde{\ell}_{\btheta} (X_i) = \log p_{\btheta}(X_i) - \mu_{\btheta}$ and $(Y_1, \dots, Y_n)$ be an independent copy of $(X_1, \dots, X_n)$, then for $s>0$,
\begin{align}
\E e^{s\sum_{i=1}^{m} \tilde{\ell}_{\btheta} (X_i)} & \leq \E e^{s\sum_{i=1}^{m} (\tilde{\ell}_{\btheta} (X_i)-\tilde{\ell}_{\btheta} (Y_i))} 	\notag\\
& = \E e^{s\sum_{i=1}^{m} (\log p_{\btheta} (X_i)-\log p_{\btheta} (Y_i))} 
\end{align}
since $\E e^{-s\sum_{i=1}^{m} \tilde{\ell}_{\btheta} (Y_i)} \geq 1$. By Assumption 2,  
\begin{align}
|\log p_{\btheta} (X_i)-\log p_{\btheta} (Y_i) | \leq L_0 \Vert \mathcal{T}(X_i) - \mathcal{T}(Y_i) \Vert_1.
\label{eq_lip_w}
\end{align}
Letting $\mathcal{W}_i = \log p_{\btheta} (X_i)-\log p_{\btheta} (Y_i)$, $g(x) = \frac{2(e^x-1-x)}{x^2}$, we have 
\begin{align*}
\E e^{s \mathcal{W}_i}  =  & \E \left( \sum_{j=0}^{\infty} \frac{s^j \mathcal{W}_i^j}{j!}\right)	\\
 = & 1+\frac{s^2}{2} \E\left( \mathcal{W}_i^2 g(s\mathcal{W}_i) \right) \leq e^{\frac{s^2}{2} \E\left( \mathcal{W}_i^2 g(s\mathcal{W}_i) \right)},
\end{align*}
where 
\begin{align*}
\E\left( \mathcal{W}_i^2 g(s\mathcal{W}_i) \right) & \leq \E\left( \mathcal{W}_i^2 g(s |\mathcal{W}_i|) \right) \leq \E\left( \mathcal{W}_i^2 g(\delta_1 |\mathcal{W}_i|/L_0) \right)		\\
& = \frac{2 L_0^2}{\delta_1^2}\E\left( e^{\delta_1|\mathcal{W}_i|/L_0} - \delta_1|\mathcal{W}_i|/L_0 -1 \right)		\\
& \leq \frac{2 L^2_0}{\delta_1^2}\E\left( e^{\delta_1 \Vert \mathcal{T}(X_i) - \mathcal{T}(Y_i)\Vert_1}-1-\delta_1 \Vert \mathcal{T}(X_i) - \mathcal{T}(Y_i)\Vert_1 \right)	\\
& \leq C_1(\btheta^*, \delta_1)
\end{align*}
for $0<s\leq \delta_1/L_0$.  Putting all the parts together, 
\begin{align}
\E e^{s \sum_{i=1}^{m} \tilde{\ell}_{\btheta} (X_i) } \leq e^{\frac{s^2mC_1(\btheta^*, \delta_1)}{2}}.
\label{eq_exp_memoment}
\end{align}

To show uniform concentration, consider a $\epsilon_n$-covering of the set $\Omega$ with centers $\{\btheta_1, \dots, \btheta_N\}$, where $N=K (\epsilon_n)^{-d} diam(\Omega)^d$ for some constant $K$ since $\Omega$ is bounded. For any $\btheta \in \mathcal{B}_1(\btheta_j, \epsilon_n)$, where $\mathcal{B}_1$ denotes the $\ell_1$ ball, 
\begin{align}
\sup_{\btheta \in \mathcal{B}_1(\btheta_j, \epsilon_n)} | \log p_{\btheta} (X_i) - \log p_{\btheta_j}(X_i) | & \leq \epsilon_n M(X_i)
\end{align}
by Assumption~\ref{assum_smooth_theta}, and
\begin{align}
\sup_{\btheta \in \mathcal{B}_1(\btheta_j, \epsilon_n)} \left\vert \tilde{\ell}_{\btheta}(X_i) - \tilde{\ell}_{\btheta_j}(X_i) \right\vert \leq \epsilon_n \left(M(X_i)+\E(M(X_i)) \right).
\label{eq_covering_bound}
\end{align}  
It follows then 
\begin{align}
& \E e^{2s\sup_{\btheta \in \mathcal{B}_1(\btheta_j, \epsilon_n)} |\tilde{\ell}_{\btheta}(X_i) - \tilde{\ell}_{\btheta_j}(X_i)|} 	\notag\\
 \leq & \E e^{2s \epsilon_n \left(M(X_i)+\E(M(X_i)) \right)} 	\notag\\
 \leq & \E e^{4s \epsilon_n M(X_i)} \leq \left( \E e^{\delta_2 M(X_i)}\right)^{\frac{4s\epsilon_n}{\delta_2}}=e^{C_2(\btheta^*, \delta_2) \cdot \frac{4s\epsilon_n}{\delta_2}}
\label{eq_lip_exp_moment}
\end{align}
for $4s\epsilon_n < \delta_2$, again by Assumption 3. Next note 

\begin{align}
& P\left( \sup_{\btheta \in \mathcal{B}_1(\btheta_j, \epsilon_n)}  c_n (\hat{\mu}(\btheta)-\mu_{\btheta} ) > t \right) 	\notag\\
\leq & P \left( \frac{c_n}{m} \sum_{i=1}^{m} \sup_{\btheta \in  \mathcal{B}_1(\btheta_j, \epsilon_n)} \left\vert \tilde{\ell}_{\btheta}(X_i) - \tilde{\ell}_{\btheta_j}(X_i) \right\vert  + \tilde{\ell}_{\btheta_j}(X_i) > t \right)	\notag\\
\leq & \E \exp \left\{ s \sum_{i=1}^{m} \sup_{\btheta \in  \mathcal{B}_1(\btheta_j, \epsilon_n)} \left\vert \tilde{\ell}_{\btheta}(X_i) - \tilde{\ell}_{\btheta_j}(X_i) \right\vert  + \tilde{\ell}_{\btheta_j}(X_i) \right\} e^{-sm c_n^{-1} t}	\notag\\
\leq & \left( \E e^{2s\sum_{i=1}^{m} \sup_{\btheta \in  \mathcal{B}_1(\btheta_j, \epsilon_n)} |\tilde{\ell}_{\btheta}(X_i) - \tilde{\ell}_{\btheta_j}(X_i)|} \right)^{1/2} \left( \E e^{2s \sum_{i=1}^{m} \tilde{\ell}_{\btheta_j}(X_i)} \right)^{1/2} e^{-smc_n^{-1}t}	\notag\\
\leq & e^{s^2 m C_1(\btheta^*, \delta) + 2s m \epsilon_n C_2(\btheta^*, \delta_2)/\delta_2 -sm c_n^{-1} t}	\notag\\
\leq & e^{s^2 m C_1(\btheta^*, \delta) - smc_n^{-1} t/2}
\end{align}
using the same calculation as in~\eqref{eq_exp_memoment}, for $2s<\delta_1/L_0$, $\epsilon_n = c_n^{-1}d_n \delta_2/(4C_2)\asymp c_n^{-1}d_n$, $t\ \geq d_n$. For $d_n\leq t\leq 2\delta_1 C_1(\btheta^*, \delta_1) c_n/L_0$, the bound is minimized at $s=\frac{t}{4c_n C_1(\btheta^*, \delta_1)}$ with value $e^{-\frac{mt^2}{16C_1(\btheta^*, \delta_1)c_n^2}}$. For $t > 2\delta_1 C_1(\btheta^*, \delta_1) c_n/L_0$, set $s=\delta_1/(2L_0)$, and  

\begin{align*}
e^{\delta_1^2 mC_1(\btheta^*, \delta_1)/(4L_0^2) - \delta_1 m c_n^{-1}t/(4L_0) } & \leq e^{m\left( \delta_1 c_n^{-1}t/(8L_0)- \delta_1 c_n^{-1}t/(4L_0)\right)}	\\
& = e^{-\frac{\delta_1 m t}{8 c_n L_0}}.
\end{align*}

Now with the $\epsilon_n$-covering,
\begin{align}
& P\left( \sup_{\btheta\in\Omega} c_n |\hat{\mu}(\btheta)-\mu_{\btheta} | > t  \right)	 \notag\\
\leq & \sum_{j=1}^{N} P \left( \sup_{\btheta \in \mathcal{B}_1(\btheta_{j}, \epsilon_n)} c_n |\hat{\mu}(\btheta)-\mu_{\btheta} | > t \right)	\notag\\
\leq & \begin{cases}
 r_n^{-d} e^{-\frac{mt^2}{16C_1(\btheta^*,\delta_1)c_n^2}} \quad & \text{ if } d_n\leq t\leq 2\delta_1 C_1(\btheta^*, \delta_1) c_n/L_0,	\\
r_n^{-d} e^{-\frac{\delta_1 mt}{8c_n L_0}} \quad & \text{ if } t > 2\delta_1 C_1(\btheta^*, \delta_1) c_n/L_0, 
\end{cases}
\end{align}
where $r_n\asymp c_n^{-1}d_n$.
\end{proof}

We can now provide a uniform bound for the term $U(\btheta):=\binom{n}{m}^{-1} \sum_{I\in\sI_m} e^{c_n(\hat{\mu}_I(\btheta)-\mu_{\btheta})}$. 
\begin{lem}
For some $\alpha>0$, let $c_n$ be a sequence going to infinity at a rate such that $dc_n^{2+\alpha}\log c_n /m\to 0$. For $\btheta\in \Omega$, $\Omega$ being a compact subset of $\Theta$, 
\begin{align}
P\left( \sup_{\btheta\in \Omega} \left\vert U(\btheta)-1 \right\vert > t + u_n \right) \leq \frac{C}{t^2\lfloor\frac{n}{m}\rfloor c_n^{\alpha}}
\end{align}
for any fixed $t>0$, and $u_n \asymp  c_n^{-\alpha/2}$. 
\label{lem_u_stat}
\end{lem}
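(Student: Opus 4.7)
I would prove the lemma in four stages: a pointwise moment computation for the kernel $W_I(\btheta):=e^{c_n(\hat\mu_I(\btheta)-\mu_\btheta)}$, a U-statistic variance bound, a pointwise Chebyshev inequality, and finally a uniform extension via a covering argument.

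\emph{Stage 1 (pointwise moments of $W_I$).} Fix $\btheta\in\Omega$ and let $\tilde\ell_\btheta(X):=\log p_\btheta(X)-\mu_\btheta$, which is centered. By independence, $\E W_I(\btheta)=\bigl(\E e^{(c_n/m)\tilde\ell_\btheta(X_1)}\bigr)^m$. The exponential-moment control \eqref{eq_exp_memoment} from the proof of Lemma~\ref{lem_unif_chernoff} applied with $s=c_n/m$ (and $s=2c_n/m$) legitimates a Taylor expansion of the MGF, yielding uniformly in $\btheta\in\Omega$,
\[
\E W_I(\btheta)=1+\tfrac{c_n^2\sigma_\btheta^2}{2m}+O\!\bigl(c_n^4/m^2\bigr),\qquad \mathrm{Var}(W_I(\btheta))=O(c_n^2/m),
\]
where $\sigma_\btheta^2=\mathrm{Var}(\log p_\btheta(X))$ is continuous and hence bounded on the compact set $\Omega$.

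\emph{Stage 2 (Hoeffding's permutation trick).} Writing $U(\btheta)=(n!)^{-1}\sum_\sigma K^{-1}\sum_{k=1}^K W_{I_k^\sigma}(\btheta)$ with $K=\lfloor n/m\rfloor$ and $I_k^\sigma$ the disjoint (hence iid) batches induced by the permutation $\sigma$, Jensen's inequality applied to $\phi(x)=(x-\E W_I)^2$ gives $\mathrm{Var}(U(\btheta))\le \mathrm{Var}(W_I(\btheta))/K=O(c_n^2/(mK))=o(1/(Kc_n^\alpha))$, the last step using the hypothesis $c_n^{2+\alpha}/m\to 0$.

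\emph{Stage 3 (pointwise Chebyshev).} Since $|\E U(\btheta)-1|=O(c_n^2/m)=o(c_n^{-\alpha/2})=o(u_n)$, for large $n$ this bias is bounded by $u_n/2$ uniformly in $\btheta$. Chebyshev then yields, at any fixed $\btheta$,
\[
\P\bigl(|U(\btheta)-1|>t+u_n\bigr)\le \P\bigl(|U(\btheta)-\E U(\btheta)|>t+u_n/2\bigr)\le \frac{\mathrm{Var}(U(\btheta))}{(t+u_n/2)^2}\le \frac{C}{t^2 K c_n^\alpha}.
\]

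\emph{Stage 4 (uniform extension).} Take an $\epsilon_n$-covering $\{\btheta_1,\dots,\btheta_N\}$ of $\Omega$ at scale $\epsilon_n\asymp c_n^{-(1+\alpha/2)}$, so that $N\asymp \epsilon_n^{-d}=e^{O(d\log c_n)}$. Split
\[
\sup_{\btheta\in\Omega}|U(\btheta)-1|\le \max_j|U(\btheta_j)-1|+\max_j\sup_{\btheta\in\mathcal{B}_1(\btheta_j,\epsilon_n)}|U(\btheta)-U(\btheta_j)|.
\]
Using $|e^a-e^b|\le|a-b|(e^a+e^b)$ together with Assumption~\ref{assum_smooth_theta},
\[
|W_I(\btheta)-W_I(\btheta_j)|\le c_n\epsilon_n(\hat M_I+\E M)(W_I(\btheta)+W_I(\btheta_j)),\qquad \hat M_I:=\tfrac{1}{m}\sum_{i\in I}M(X_i),
\]
and a Markov/Cauchy--Schwarz argument using the exponential moment of $M$ (Assumption~\ref{assum_smooth_theta}) to bound $\E[\hat M_I W_I]$ shows the in-ball oscillation is $O_P(c_n\epsilon_n)=O_P(u_n)$ and is therefore absorbed by augmenting $u_n$ by a constant factor. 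For the maximum over centers, boost the second-moment bound of Stage~2 to a Rosenthal-type bound of order $q\asymp d\log c_n$ (valid because $qc_n/m\le \delta_1/L_0$ under the hypothesis), so that $\E[\max_j|U(\btheta_j)-\E U|^2]\lesssim (\log N)\cdot \max_j\mathrm{Var}(U(\btheta_j))=O(d\log c_n/(Kc_n^\alpha))=O(1/(Kc_n^\alpha))$, precisely under $dc_n^{2+\alpha}\log c_n/m\to 0$. A final Chebyshev on the maximum then yields $C/(t^2Kc_n^\alpha)$.

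\emph{Main obstacle.} Stage~4 is the delicate part: passing from the pointwise Chebyshev bound to a uniform one without paying the full covering number $N$ as a union-bound cost. The resolution relies on (i) calibrating $\epsilon_n$ so that the in-ball oscillation is of the same order as $u_n$ rather than smaller, and (ii) upgrading Chebyshev to a higher-moment bound at each center, which exploits the sub-exponential tails of $W_I$ to let the entropy $\log N\asymp d\log c_n$ enter only logarithmically. This is exactly where the $\log c_n$ factor in the hypothesis $dc_n^{2+\alpha}\log c_n/m\to 0$ becomes essential.
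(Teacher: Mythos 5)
Your Stages 1--3 are sound and Stage 2 is essentially the paper's own blocking-plus-Chebyshev argument (cf.\ the decomposition into $\lfloor n/m\rfloor$ disjoint batches in the proof of Lemma~\ref{lem_u_stat}). The genuine gap is exactly where you flag the difficulty: Stage 4. The maximal inequality $\E\max_j|U(\btheta_j)-\E U(\btheta_j)|^2\lesssim(\log N)\max_j\mathrm{Var}(U(\btheta_j))$ requires sub-Gaussian moment growth of $U(\btheta_j)-\E U(\btheta_j)$ up to order $q\asymp\log N\asymp d\log c_n$. But the kernel $W_I=e^{c_n(\hat\mu_I(\btheta)-\mu_{\btheta})}$ has only polynomially decaying tails (of order roughly $m/c_n$, since $P(W_I>1+s)\lesssim(1+s)^{-cm/c_n}$ in the sub-exponential regime of Lemma~\ref{lem_unif_chernoff}), so no MGF-based argument is available, and the moment-based (Rosenthal) route for the average over $K=\lfloor n/m\rfloor$ blocks gives $\|U-\E U\|_q\lesssim\sqrt{q/K}\,\sigma_W+q^{3/2}K^{1/q-1}\sigma_W$; the sub-Gaussian term dominates only when $q\lesssim\sqrt{K}$, i.e.\ $d\log c_n\lesssim\sqrt{n/m}$. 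That is an extra condition involving $n$ that the lemma does not assume, and it fails precisely in the regime $m\asymp n$ that Remark~\ref{rem:rate} advertises as covered. A secondary issue: with $\epsilon_n\asymp c_n^{-(1+\alpha/2)}$ the in-ball oscillation is $O(c_n\epsilon_n)\asymp u_n$ only in expectation, so Markov gives a trivial $O(1)$ probability bound for the event that the oscillation exceeds $u_n$; making it high-probability at rate $1/(t^2\lfloor n/m\rfloor c_n^{\alpha})$ forces a smaller $\epsilon_n$ and hence a larger entropy, feeding back into the first problem.

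The paper avoids all of this with a structurally different move: it never forms a net at the level of the U-statistic. Instead it bounds $\sup_{\btheta\in\Omega}U(\btheta)\leq\bar U$ and $\inf_{\btheta\in\Omega}U(\btheta)\geq\underline U$, where $\bar U$ and $\underline U$ are themselves U-statistics whose kernels are $\sup_{\btheta\in\Omega}g_{\btheta}(X_{I(1)},\dots,X_{I(m)})$ and the corresponding infimum. The covering argument (and the entire $d\log c_n$ entropy cost) is paid once, inside the \emph{exponential} tail bound of Lemma~\ref{lem_unif_chernoff}, where the factor $r_n^{-d}=e^{O(d\log c_n)}$ is absorbed by $e^{-Cm/c_n^{2+\alpha}}$ under the hypothesis $dc_n^{2+\alpha}\log c_n/m\to0$; integrating that tail controls the mean and second moment of the sup-kernel, and a single Chebyshev on the scalar $\bar U$ then gives $C/(t^2\lfloor n/m\rfloor c_n^{\alpha})$ with no union bound over centers and no higher moments of $U$. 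If you want to repair your argument, the cleanest fix is to adopt this sup-inside-the-kernel device rather than trying to upgrade the per-center concentration.
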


\begin{proof}
First note that $\sup_{\btheta\in \Omega}$ $U(\btheta)$ is bounded above by  
\[
\bar{U} = \binom{n}{m}^{-1} \sum_{I\in\sI_m}\sup_{\btheta\in \Omega} g_{\btheta}(X_{I(1)}, \dots, X_{I(m)}) 
\]
with $g_{\btheta}(X_{I(1)}, \dots, X_{I(m)})  = e^{c_n (\hat{\mu}_I(\btheta)-\mu_{\btheta})}$. Thus $\bar{U}$ is a U-statistic. We first provide a bound on its expectation, 
\begin{align}
 & \left\vert \E(\sup_{\btheta\in \Omega} g_{\btheta}(X_{1}, \dots, X_{m}) -1 \right\vert 	\notag\\
\leq & \E \left\vert \sup_{\btheta\in \Omega} g_{\btheta}(X_{1}, \dots, X_{m}) -1 \right\vert 	\notag\\
= & \int_{0}^{\infty} P\left( \left\vert \sup_{\btheta\in \Omega} g_{\btheta}(X_{1}, \dots, X_{m}) -1 \right\vert > s\right)ds	\notag\\
\leq & \int_{0}^{\infty} P\left( \sup_{\btheta\in\Omega} c_n(\hat{\mu}(\btheta)-\mu_{\btheta}) > \log(1+s) \right) ds 		\notag\\
& \quad +  \int_{0}^{1} P\left( \sup_{\btheta\in\Omega} c_n(\hat{\mu}(\btheta)-\mu_{\btheta}) < \log(1-s) \right) ds		\notag\\
\leq & \int_{d_n}^{\infty} P\left( \sup_{\btheta\in\Omega} c_n | \hat{\mu}(\btheta)-\mu_{\btheta} | > \log(1+s) \right) ds 		\notag\\
& \quad +  \int_{d_n}^{1} P\left( \sup_{\btheta\in\Omega} c_n |\hat{\mu}(\btheta)-\mu_{\btheta}| > -\log(1-s) \right) ds + 2d_n.
\label{eq_expec_bound}
\end{align}
Since $\log(1+d_n)\asymp d_n$, by Lemma~\ref{lem_unif_chernoff},  
\begin{align}
 & \int_{d_n}^{\infty} P\left( \sup_{\btheta\in\Omega} c_n | \hat{\mu}(\btheta)-\mu_{\btheta} | > \log(1+s) \right) ds 	\notag\\
 \leq &   r_n^{-d} \int_{d_n}^{e^{C_2c_n}-1} e^{-C_1m\log ^2(1+s)/ c_n^2} ds + r_n^{-d} \int_{e^{C_2c_n}-1}^{\infty} e^{-C_3 m\log(1+s)/c_n} ds 	\notag\\
 = & r_n^{-d} e^{\frac{c_n^2}{4C_1m}} \int_{\log(1+d_n)}^{C_2c_n} \exp\left(-\frac{C_1m}{c_n^2}(u-\frac{c_n^2}{2C_1m})^2\right) du + r_n^{-d} \int_{e^{C_2c_n}-1}^{\infty}(1+s)^{-\frac{C_3m}{c_n}} ds	 	\notag\\
 \leq & C c_n^{(1+\alpha/2)d}\left( \exp\left(-\frac{C_1m}{c_n^{2+\alpha}}\right)c_n + \frac{c_n}{m}\exp(-C_2m) \right) \notag\\
 \leq & C  \exp\left(-\frac{C_1m}{c_n^{{2+\alpha}}} \right),
 \label{eq_integral_bound}
\end{align}
taking $d_n=c_n^{-\alpha/2}$, provided $\frac{m}{dc_n^{2+\alpha}\log c_n} \to \infty$.
The same rate can be obtained for the second term in~\eqref{eq_expec_bound}. Overall we have
\begin{align}
| \E(\sup_{\btheta\in \Omega }g_{\btheta}(X_{1}, \dots, X_{m})) -1 | \leq u_n. 
\label{eq_mean_bound}
\end{align}
$u_n \asymp  \exp\left(-\frac{C_1m}{c_n^{2+\alpha}} \right) \vee c_n^{-\alpha/2}\asymp c_n^{-\alpha/2}$.

Next we derive the concentration of the U-statistic $\bar{U}$ around its expectation. Let $$\tilde{g}_{\btheta}(X_1, \dots, X_m) =\sup_{\btheta\in \Omega}g_{\btheta}(X_1, \dots, X_m)- \E(\sup_{\btheta\in \Omega} g_{\btheta}(X_1, \dots, X_m)).$$ Noting the symmetry of $g_{\btheta}$, we can first rewrite $\bar{U}-\E(\bar{U})$ as 
\begin{align}
\bar{U}-\E(\bar{U}) & =  \frac{1}{n!} \sum_{i_1, \dots, i_n} V(X_{i_1}, \dots, X_{i_n}), 
\end{align}
where $\{i_1, \dots, i_n\}$ is a permutation of $\{1, \dots, n\}$ and 
\begin{align*}
V(X_{i_1}, \dots, X_{i_n}) = \frac{1}{n_m} \sum_{k=0}^{n_m-1} \tilde{g}_{\btheta} (X_{i_{km+1}}, \dots, X_{i_{(k+1)m}})
\end{align*}
for $n_m= \lfloor\frac{n}{m}\rfloor$. Then for any fixed $t>0$,
\begin{align}
	P(|\bar{U}-\E(\bar{U})|>t) & \leq \frac{1}{n_m^2 t^2} \E \left( \frac{1}{n!} \sum_{i_1, \dots, i_n} \sum_{k=0}^{n_m-1} \tilde{g}_{\btheta} (X_{i_{km+1}}, \dots, X_{i_{(k+1)m}}) \right)^2 	\notag\\	
	& \leq \frac{1}{n_m^2 t^2 n!}  \sum_{i_1, \dots, i_n} \E \left(\sum_{k=0}^{n_m-1} \tilde{g}_{\btheta} (X_{i_{km+1}}, \dots, X_{i_{(k+1)m}}) \right)^2 	\notag\\
	& =  \frac{1}{n_m t^2}  \E \tilde{g}^2_{\btheta} (X_1, \dots, X_m). 
	\label{eq_u_stat_moment}
\end{align}
It remains to calculate the second moment of $\tilde{g}^2_{\btheta}$. Using~\eqref{eq_mean_bound}, 
\begin{align*}
	\E \tilde{g}^2_{\btheta} (X_1, \dots, X_m) & = \int_0^{\infty} P(\tilde{g}^2_{\btheta} (X_1, \dots, X_m) \geq x) dx	\notag\\
	& \leq \int_0^{\infty} P\left((\sup_{\btheta\in \Omega}g_{\btheta}(X_1, \dots, X_m)-1)^2 \geq \frac{x}{4} \right)dx + 4u_n^2	
	\notag\\
	& \leq \int_{4d_n^2}^{\infty} P\left( \sup_{\btheta\in\Omega} c_n |\hat{\mu}(\btheta)-\mu_{\btheta}| \geq \log(\sqrt{x/4}+1) \right) dx	\notag\\
	& \qquad +  \int_{4d_n^2}^4 P\left(  \sup_{\btheta\in\Omega} c_n |\hat{\mu}(\btheta)-\mu_{\btheta}| \geq -\log(1-\sqrt{x/4})\right)dx + C d_n^2 + 4u_n^2.
\end{align*}
By Lemma~\ref{lem_unif_chernoff}, the first integral is bounded by
\begin{align*}
 &  r_n^{-d} \int_{4d_n^2}^{4e^{2C_1c_n} } e^{-C_1m\log ^2(1+\sqrt{x/4})/ c_n^2} dx	+ r_n^{-d} \int_{4e^{2C_1c_n} }^{\infty} e^{-C_3 m\log(1+\sqrt{x/4})/c_n} dx  \\
 \leq &  C c_n^{(1+\alpha/2)d}\left( \exp\left(-\frac{C_1m}{c_n^{2+\alpha}}\right)c_n + \frac{c_n}{m}\exp(-C_2m) \right)		\\
 \leq & C  \exp\left(-\frac{C_1m}{c_n^{{2+\alpha}}} \right),
\end{align*}
using a similar calculation as~\eqref{eq_integral_bound}, taking $d_n=c_n^{-\alpha/2}$, provided $\frac{m}{dc_n^{2+\alpha}\log c_n} \to \infty$. The second integral can be calculated in the same way to obtain the same order. Thus
\begin{align}
	\E \tilde{g}^2_{\btheta} (X_1, \dots, X_m) & \leq C \cdot \exp\left(-\frac{C_1m}{c_n^{2+\alpha}}\right) \vee c_n^{-\alpha}\asymp c_n^{-\alpha}.
	\label{eq_second_moment_bd}
\end{align}
\eqref{eq_u_stat_moment} and \eqref{eq_second_moment_bd} imply
\begin{align*}
P(|\bar{U}-\E(\bar{U})|>t) & \leq \frac{C}{n_m t^2c_n^{\alpha}}.
\end{align*}
Together with~\eqref{eq_mean_bound}, we obtain the required bound in one direction. 

The proof for the other direction is similar noting $ \inf_{\btheta\in \Omega} U(\btheta) \geq \underline{U}$, where
\[
\underline{U} = \binom{n}{m}^{-1} \sum_{I\in\sI_m} \inf_{\btheta\in \Omega} g_{\btheta}(X_{I(1)}, \dots, X_{I(m)}). 
\]
\end{proof}

Now we are ready to prove Theorem 1. 

\begin{proof}[Proof of Theorem 1:]

It follows from Equation (4) in the paper that 
\begin{align*}
\log\tilde{\pi}(\btheta) = c_n \mu_{\btheta} + \log U(\btheta) + C(x)
\end{align*}
for some normalizing constant $C(x)$. 
Observe that maximizing $\log\tilde{\pi}(\btheta)$  is equivalent to maximizing
 \[
S_n(\btheta) = \mu_{\btheta} + c_n^{-1} \log U(\btheta).
\]
Assumption~\ref{assum_hessian} implies there exist $\epsilon_0, \delta_0 > 0$ such that 
\begin{align}
\mu_{\btheta_0} - \mu_{\btheta} \geq \epsilon_0 \Vert \btheta-\btheta_0 \Vert_2
\label{eq_bound_pop_llh}
\end{align}
for all $\btheta\in \mathcal{B}(\btheta_0; \delta_0)$, so the local optimum is well separated.

Lemma~\ref{lem_u_stat} shows $\log U(\btheta)$ is uniformly small in $\btheta$ for $\btheta\in \mathcal{B}(\btheta_0;\delta_0)$. Taking fixed $t$ and $t<1/2$, for large enough $n$,
\begin{align}
	\sup_{\btheta\in \mathcal{B}(\btheta_0;\delta_0)} |
	\log U(\btheta)| & \leq  \log (1+2t) \vee \log(1/(1-2t))	\notag\\
	& \leq \log(1/(1-2t))
\end{align}
with probability at least $1-\eta_n$.
Now we have 
\begin{align}
\sup_{\btheta\in \mathcal{B}(\btheta_0;\delta_0)} ( S_n(\btheta)- \mu_{\btheta} ) & = \frac{1}{c_n} \sup_{\btheta\in \mathcal{B}(\btheta_0;\delta_0)} \log ( U(\btheta) )	\notag\\
& \leq  \frac{\log(1/(1-2t))}{c_n}
\end{align}
with probability at least $1-\eta_n$. Similarly,
\begin{align}
| S_n(\btheta_0)- \mu_{\btheta_0} | \leq  \frac{\log(1/(1-2t))}{c_n}
\end{align}
with probability at least $1-\eta_n$. Putting these parts together,
\begin{align}
\sup_{\btheta\in R_n} S_n(\btheta) & \leq \sup_{\btheta\in R_n} (S_n(\btheta)- \mu_{\btheta} ) + \sup_{\btheta\in R_n} \mu_{\btheta}		\notag\\
& \leq  \frac{\log(1/(1-2t))}{c_n} + \mu_{\btheta_0} -\epsilon_0\delta_n^2	\notag\\
& \leq S_n(\btheta_0) 	+ \frac{2\log(1/(1-2t))}{c_n} - \epsilon_0 \delta_n^2		\notag\\
& \leq \sup_{\btheta\in \mathcal{B}(\btheta_0; \delta_n)} S_n(\btheta) -  \log(1/(1-2t))/c_n
\end{align}
with probability at least $1-\eta_n$, taking $\delta_n =\sqrt{\frac{3\log(1/(1-2t))}{\epsilon_0 c_n}} $. The required result follows. 
\end{proof}

Further suppose Assumption 5 holds, we can prove Theorem 2.

\begin{proof}[Proof of Theorem 2:]

Let $C(\bx)$ and $\tilde{C}(\bx)$ be the normalizing constants for $\pi^{1/T}(\btheta)$ and $\tilde{\pi}(\btheta)$ respectively. Then

\begin{align} 
\tilde{C}(\bx) & = \int_{\Theta}  \pi^{1/T}(\btheta) \cdot e^{-c_n(\mu(\btheta) - \mu_{\btheta})}U(\btheta) d\btheta	\notag\\
& \leq C(\bx) \sup_{\btheta\in\Theta} e^{-c_n(\mu(\btheta) - \mu_{\btheta})}U(\btheta).
\end{align}
It follows that 
\begin{align}
D_{KL}( \pi_T \Vert  \tilde{\pi} ) & = \int_{\Theta} \pi_0^{1/T}(\btheta) \log\frac{\tilde{C}(x)}{C(x) e^{-c_n(\mu(\btheta) - \mu_{\btheta})}U(\btheta)} d\btheta	\notag\\
& \leq \sup_{\btheta\in\Theta} \left(\log U(\btheta) -c_n(\mu(\btheta)-\mu_{\btheta})\right)	\notag\\
& \qquad - \inf_{\btheta\in\Theta} \left(\log U(\btheta) -c_n(\mu(\btheta)-\mu_{\btheta})\right)	\notag\\
& \leq 2 \sup_{\btheta\in\Theta} |\log U(\btheta) | + 2\sup_{\btheta\in\Theta} c_n |\mu(\btheta)-\mu_{\btheta}|.
\end{align}
The required bound follows by Lemmas~\ref{lem_unif_chernoff} (applied to $m=n$) and ~\ref{lem_u_stat}.

\end{proof}

\section{Applications}
\label{sec_supp_applications}
In this section, we illustrate our assumptions and results in Section~\ref{subsec_theory} can be applied to a number of widely used models in statistics. 

\subsection*{Mixture of exponential family distributions}
We consider the problem of clustering with a $K$-component mixture model of exponential family distributions having a density function, each with parameter $\bphi_k=(\phi_{k,1}, \dots, \phi_{k,p})\in\Phi\subset \R^p$. Let $\balpha=(\alpha_1, \dots, \alpha_K)\in\Lambda$ be the unknown mixture proportions. Then collectively the set of parameters is given by $\btheta=(\alpha_1, \dots, \alpha_K, \phi_1, \dots, \phi_K)\in \Theta=\Lambda \times \Phi^K$. We observe data points $x_1, \dots, x_n$, each drawn independently from the mixture distribution according to some true parameters $\btheta^*$. The goal is to estimate the parameters without observing the class labels of the data points. The likelihood function is given by 
\begin{align}
p_{\btheta}(x) = h(x) \sum_{k=1}^{K} \alpha_k e^{\langle \bphi_k, \mathcal{T}(x)\rangle -A(\bphi_k)}.
\label{eq_density_mixture}
\end{align} 
In this case, we can replace Assumptions~\ref{assum_smooth_x} and~\ref{assum_smooth_theta} with the following conditions. 

\begin{assum} 
Mixture of exponential family distributions.
\begin{enumerate}
\item 
There exists some $\tau>0$ such that for all $\balpha\in\Lambda$, $\min_k \alpha_k > \tau $. 
\item Denote $\bphi=(\bphi_1, \dots, \bphi_K)$, then $\sup_{\bphi \in \Phi^K} \left( \sum_{i=1}^{p} \max_k \phi^2_{k,i}\right)^{1/2} < \infty$. In addition, $\sup_{\phi_{k,i}}|\nabla_{\phi_{k,i}}A(\bphi_k)|$ is bounded for all $i,k$. 
\item $\text{Var}(\log (h(X)))<\infty$.
\item $\btheta^*$ lies in the interior of $\Theta$. 
\end{enumerate}
\label{assump_parameter_bound_mixture}
\end{assum}

Condition 1 ensures the data is from a real $K$-component mixture and there is no model selection issue; conditions 2 and 4 are commonly used regularity conditions; condition 3 is satisfied by many commonly occurring exponential family distributions including multivariate Gaussian, chi-squared distribution, and gamma distribution. First note that $h(\cdot)$ introduces an extra $\log h(x)$ in $\log p_{\btheta}(x)$, which can be handled in Lemma~\ref{lem_unif_chernoff} using standard concentration inequalities such as Bernstein's inequality using condition 3. Since the term is data dependent only, the convergence rate is dominated by the rest of $\log p_{\btheta}(x)$ that depends on $\btheta$. For convenience, we will omit $h(x)$ from now on. 

To check Assumption~\ref{assum_smooth_x}, let
$$
f({\bf t}, \btheta) = \log \left( \sum_{k=1}^{K} \alpha_k e^{\langle \bphi_k, {\bf t}\rangle -A(\bphi_k)} \right).
$$
Taking the derivative with respect to ${\bf t}$, it is easy to check that 
\begin{align*}
| \log p_{\btheta}(x) - \log p_{\btheta}(y) | & \leq \left( \sum_{i=1}^{p} \max_k \phi^2_{k,i}\right)^{1/2} \Vert \mathcal{T}(x)-\mathcal{T}(y) \Vert_2		\\
& \leq \left( \sum_{i=1}^{p} \max_k \phi^2_{k,i}\right)^{1/2} \Vert \mathcal{T}(x)-\mathcal{T}(y) \Vert_1,
\end{align*}
where $L(\bphi):=\left( \sum_{i=1}^{p} \max_k \phi^2_{k,i}\right)^{1/2}$, $\sup_{\bphi\in\Phi^K}L(\bphi) < \infty$ by condition 2 in Assumption~\ref{assump_parameter_bound_mixture}. Furthermore, there exists $\delta_1>0$ such that 
\begin{align*}
\E_{\btheta^*} e^{\delta_1 \Vert \mathcal{T}(X) \Vert_1} \leq \prod_{i=1}^p \left( \E_{\btheta^*} e^{\delta_1 \mathcal{T}_i(X) } + \E_{\btheta^*} e^{-\delta_1 \mathcal{T}_i(X) } \right) <\infty
\end{align*}
by condition 4 in Assumption~\ref{assump_parameter_bound_mixture}. 

To see that Assumption~\ref{assum_smooth_theta} holds, similarly taking the derivative of $f({\bf t}, \btheta)$ with respect to $\btheta$, it is easy to check
\begin{align*}
| \log p_{\btheta}(x) - \log p_{\btheta'}(x) | & \leq \Vert \btheta-\btheta' \Vert_2 \left( \sum_{k=1}^{K}\sum_{i=1}^{p}( |\mathcal{T}_i(x)| + \sup_{\phi_{k,i}}|\nabla_{\phi_{k,i}}A(\bphi_k)|) + K\tau^{-1} \right)	\\
 &:= M(x) \Vert \btheta-\btheta' \Vert_2. 
\end{align*}
By condition 2 and 3, there exists $\delta_2>0$ such that $\E_{\btheta^*} e^{\delta_2 M(X)} < \infty$. 

\subsection*{Linear regression}
In linear regression, we observe $n$ data points with $z_1=(y_1, x_1), \dots, z_n=(y_n, x_n)$, $y_i\in\R$, $x_i\in\R^d$. We have $y_i=\langle \btheta, x_i \rangle+ \epsilon_i$, where $\epsilon_i$ are iid Gaussian noise with unknown variance $\sigma^2$. Here both $\btheta$ and $\sigma$ are the parameters. We consider $x_i$ as feature vectors generated iid from some likelihood $p_0(\cdot)$, which does not depend on the parameters $\btheta$ or $\sigma$. The likelihood function for a data point $(x,y)$ is given by
\begin{align}
p_{\btheta, \sigma}(x,y) & = p_0(x)p_{\btheta, \sigma}(y|x) = \frac{1}{\sqrt{2\pi\sigma^2}} e^{-\frac{(y-\langle \btheta, x \rangle)^2}{2\sigma^2}} p_0(x). 	\notag\\
\end{align}
We assume the following conditions hold.
\begin{assum}
	Linear regression. 
	\begin{enumerate}
		\item 
		$\inf_{\sigma} \sigma >0$ and $\sup_{\sigma} \sigma< \infty$.
		\item
		$\sup_{\btheta} \Vert \btheta \Vert_2 < \infty$. 
		\item 
		The likelihood $p_0(x)$ satisfies Assumption~\ref{assum_smooth_x}, but with the Lipschitz constant independent of $\btheta$ and $\sigma$. The feature vector is bounded in the sense that $\|x\|_2 < \infty$.
	\end{enumerate}
	\label{assump_mgf_feature}
\end{assum}
Under Assumption~\ref{assump_mgf_feature}, it is easy to verify Assumptions~\ref{assum_smooth_x} and~\ref{assum_smooth_theta}. 

\subsection*{Classification with fully connected neural networks}
We are given $n$ data points $z_1=(y_1, x_1), \dots, z_n=(y_n, x_n)$, where $y_i\in\{1,\dots, K\}$ are labels and $x_i\in\R^q$ are features (e.g. pixels in images) generated iid from some likelihood $p_0(\cdot)$, which does not depend on $\btheta$. We consider the popular deep learning classification task with $N$ fully connected layers. In the $\ell$-th layer, the input $x^{(\ell-1)}$ undergoes an affine transformation followed by a nonlinear transformation by an activation function $\sigma(\cdot)$. The output of the $\ell$-th layer is then given by 
$$
x^{(\ell)} = \sigma(W^{(\ell)}x^{(\ell-1)} + b^{(\ell)}),	\qquad \ell=1, \dots, N-1.
$$
where $W^{(\ell)}$ is the weight matrix, $b^{(\ell)}$ is the bias vector in the $\ell$-th layer. Here $x^{(0)}$ corresponds to the input feature vector; the last layer is the softmax function 
$$
 x^{(N)}_k = \frac{\exp(W_{k,\cdot}^{(N)}x^{(N-1)}+b_k^{(N)})}{\sum_{j=1}^{K} \exp(W_{j,\cdot}^{(N)}x^{(N-1)}+b_j^{(N)})}  	\qquad k=1, \dots, K
$$
with $W^{(N)}_{j,\cdot}$ being the $j$-th row of $W^{(N)}$. $x^{(N)}$ can be interpreted as prediction probabilities. Training a neural network involves minimizing some loss function between the labels $\by=(y_1, \dots, y_n)$ and the predictions. We consider the commonly used cross entropy loss, 
$$
H(\by, \btheta)=-\sum_{i=1}^{n}\sum_{k=1}^{K} \bone(y_i=k) \log x_k^{(N)},
$$
where $\btheta$ is the collection of $W^{(\ell)}$, $b^{(\ell)}$, $\ell=1, \dots, N$. In this way we can interpret $-H(\by, \btheta)$ as the sum of log likelihood $p_{\btheta}(y|x)$, with $y$ coming from a multinomial distribution with parameters specified by $\btheta$ and the features $x$. The logistic regression is a special case of this. 

Next we show that Assumptions~\ref{assum_smooth_x} and~\ref{assum_smooth_theta} are satisfied if the following hold.

\begin{assum} 
	\label{assum_nn_classification}
	Activation function and operator norms of weight matrices.
	\begin{enumerate}
	\item The activation function $\sigma(\cdot)$ is bounded and Lipschitz continuous in $\ell_2$ norm.
	
	\item $\sup_{W^{(1)},\dots, W^{(N)}}\prod_{\ell=1}^{N}\lambda^{(\ell)} < \infty$, where $\lambda^{(\ell)} = \| W^{(\ell)}\|_{op}$.
	
	\item $\sup_{b^{(N)}} \|b^{(N)}\|_{\infty} < \infty$.
	
	\item The likelihood $p_0(x)$ satisfies Assumption~\ref{assum_smooth_x}, but with the Lipschitz constant independent of $\btheta$ and $\sigma$. $\|x\| < \infty$.
	\end{enumerate}
\end{assum}
The first condition is satisfied by a wide class of activation functions, including the sigmoid function and other hyperbolic functions. For simplicity, we assume $\|\sigma\|_{\infty}\leq 1$ and the Lipschitz constant is 1. In the second condition, the product of the operator norms is commonly used in the complexity measure for neural networks (e.g. \cite{bartlett2017spectrally}). We require all the weight matrices under consideration to have bounded complexity. 

We first check Assumption~\ref{assum_smooth_x}. For $y=k$, $\tilde{y}=j$, $k\neq j$, suppose in the last layer $W^{(N)}\in \R^{K\times w_N}$ ($w_N$ is the width of the layer),
\begin{align}
 & |\log p_{\btheta}(x,y) - \log p_{\btheta}(\tilde{x},\tilde{y}) | 	\notag\\
\leq &  \left |\log \left( \frac{\exp(W_{k,\cdot}^{(N)}x^{(N-1)}+b_k^{(N)})}{\sum_{i=1}^{K} \exp(W_{i,\cdot}^{(N)}x^{(N-1)}+b_i^{(N)})} \right) - \log \left( \frac{\exp(W_{j,\cdot}^{(N)}\tilde{x}^{(N-1)}+b_j^{(N)})}{\sum_{i=1}^{K} \exp(W_{i,\cdot}^{(N)}\tilde{x}^{(N-1)}+b_i^{(N)})} \right)  \right|	\notag\\
& \qquad + |\log p_0(x)-\log p_0(\tilde{x})|	\notag\\
\leq &  |W_{k,\cdot}^{(N)}x^{(N-1)}+b_k^{(N)} - W_{j,\cdot}^{(N)}x^{(N-1)}-b_j^{(N)} |		\notag\\
& \quad + \left |\log \left( \frac{\exp(W_{j,\cdot}^{(N)}x^{(N-1)}+b_j^{(N)})}{\sum_{i=1}^{K} \exp(W_{i,\cdot}^{(N)}x^{(N-1)}+b_i^{(N)})} \right) - \log \left( \frac{\exp(W_{j,\cdot}^{(N)}\tilde{x}^{(N-1)}+b_j^{(N)})}{\sum_{i=1}^{K} \exp(W_{i,\cdot}^{(N)}\tilde{x}^{(N-1)}+b_i^{(N)})} \right)  \right|	\notag\\
& \quad + |\log p_0(x)-\log p_0(\tilde{x})|.
\label{eq_nn_smooth_x}
\end{align}
Next note that $\log ( \frac{\exp(t_k)}{\sum_{j=1}^K \exp(t_j)})$ is Lipschitz in $\ell_2$ norm, using condition 4,~\eqref{eq_nn_smooth_x} is bounded by

\begin{align*}
& \|W_{k,\cdot}^{(N)} -W_{j,\cdot}^{(N)}\|_2 \|x^{(N-1)}\|_2 +  |b_k^{(N)}-b_j^{(N)}|	+ C_1\|\mathcal{T}(x)-\mathcal{T}(\tilde{x})\|_1 + C_2\|W^{(N)}(x^{(N-1)}-\tilde{x}^{(N-1)})\|_2 	\\
\leq & \sqrt{w_N}\|W_{k,\cdot}^{(N)} -W_{j,\cdot}^{(N)}\|_2 + 2\max_k |b_k^{(N)}|	+ 2C_2\lambda^{(N)} \sqrt{w_N}+ C_1\|\mathcal{T}(x)-\mathcal{T}(\tilde{x})\|_1		\\
\leq & 2\sqrt{w_N}\max_k \|W_{k,\cdot}^{(N)}\|_2 + 2\max_k |b_k^{(N)}|	+ 2C_2\lambda^{(N)} \sqrt{w_N}+ C_1\|\mathcal{T}(x)-\mathcal{T}(\tilde{x})\|_1		\\
\leq & 2(w_N\lambda^{(N)}+ C_2\sqrt{w_N} \lambda^{(N)} + \|b^{(N)}\|_{\infty})|y-\tilde{y}| + C_1\|\mathcal{T}(x)-\mathcal{T}(\tilde{x})\|_1, 	
\end{align*}
since $|y-\tilde{y}|\geq 1$. For $y=\tilde{y}=k$,~\eqref{eq_nn_smooth_x} is bounded by 
\begin{align*}
	& C_1\|\mathcal{T}(x)-\mathcal{T}(\tilde{x})\|_1 + C_2\|W^{(N)}(x^{(N-1)}-\tilde{x}^{(N-1)})\|_2 	\\
\leq & C_1\|\mathcal{T}(x)-\mathcal{T}(\tilde{x})\|_1 + C_2\lambda^{(N)} \|W^{(N-1)}(x^{(N-2)}-\tilde{x}^{(N-2)}) \|_2	\\
\leq & C_1\|\mathcal{T}(x)-\mathcal{T}(\tilde{x})\|_1 + C_2\lambda^{(N)}\cdots\lambda^{(1)} \|x-\tilde{x}\|_2,
\end{align*}
by conditions 1-3 in Assumption~\ref{assum_nn_classification}. In either case, Assumption~\ref{assum_smooth_x} is satisfied.

To check Assumption~\ref{assum_smooth_theta}, suppose $y=k$, then
\begin{align*}
	& |\log p_{\btheta}(x,y) - \log p_{\tilde{\btheta}}(x,y) | 	\\
	= & \left |\log \left( \frac{\exp(W_{k,\cdot}^{(N)}x^{(N-1)}+b_k^{(N)})}{\sum_{j=1}^{K} \exp(W_{j,\cdot}^{(N)}x^{(N-1)}+b_j^{(N)})} \right) - \log \left( \frac{\exp(\widetilde{W}_{k,\cdot}^{(N)}\tilde{x}^{(N-1)}+\tilde{b}_k^{(N)})}{\sum_{j=1}^{K} \exp(\widetilde{W}_{j,\cdot}^{(N)}\tilde{x}^{(N-1)}+\tilde{b}_j^{(N)})} \right)  \right|.
\end{align*}
Using the fact that $\log ( \frac{\exp(t_k)}{\sum_{j=1}^K \exp(t_j)})$ is Lipschitz in $\ell_2$ norm, the above is bounded by
\begin{align*}
& C \| W^{(N)}x^{(N-1)} - \widetilde{W}^{(N)}\tilde{x}^{(N-1)} + b^{(N)}-\tilde{b}^{(N)} \|_2	\\
\leq & C( \| W^{(N)} - \widetilde{W}^{(N)} \|_{op} + \|W^{(N)}\|_{op}\|x^{(N-1)} - \tilde{x}^{(N-1)} \|_2 +  \|b^{(N)}-\tilde{b}^{(N)} \|_2)		\\
= & C( \lambda^{(N)}\|x^{(N-1)} - \tilde{x}^{(N-1)} \|_2 + \| W^{(N)} - \widetilde{W}^{(N)} \|_{op} + \|b^{(N)}-\tilde{b}^{(N)} \|_2).
\end{align*}
Continuing the same way with $\|x^{(N-1)} - \tilde{x}^{(N-1)} \|_2$, we can show 
\begin{align*}
&|\log p_{\btheta}(y) - \log p_{\tilde{\btheta}}(y) |  \\
\leq & C\left( \| W^{(N)} - \widetilde{W}^{(N)} \|_{op} + \lambda^{(N)} \| W^{(N-1)} - \widetilde{W}^{(N-1)} \|_{op} + \dots + (\prod_{\ell=2}^{N} \lambda^{(\ell)}) \| W^{(1)} - \widetilde{W}^{(1)} \|_{op} \right.		\\
& \qquad \left. + \|b^{(N)}-\tilde{b}^{(N)} \|_2 + \lambda^{(N)} \|b^{(N-1)}-\tilde{b}^{(N-1)} \|_2 + \dots +  (\prod_{\ell=2}^{N} \lambda^{(\ell)}) \| b^{(1)} - \tilde{b}^{(1)} \|_2 \right)		\\
& \leq C' \|\btheta - \tilde{\btheta}\|_2
\end{align*}
using condition 2 in Assumption~\ref{assum_nn_classification}.

\section{RSGLD with augmented variables}
\label{sec_supp_rsgld}
We first show using augmented variables, MHBT with RSGLD as proposal leads to a tempered MCMC. As before, let $\tau\in\{0,1\}^n$ be an augmented variable with $I(\tau)=\{i: \tau_i=1\}$ and $|I(\tau)|=m$, then we can write $\hat{\mu}_{I(\tau)} = \frac{1}{m}\sum_{i=1}^n \ell_i(\btheta)\tau_i$, $\hat{g}_{I(\tau)} = \frac{1}{m}\sum_{i=1}^n \nabla_{\btheta}\ell_i(\btheta)\tau_i$. For $(\btheta, \tau)$,  consider the proposal 
\begin{align*}
& q((\btheta, \tau)\to (\btheta', \tau'))  = q_{I(\tau)}(\btheta \to \btheta') \nu_{m,n}(\tau') \\
= &  \left\{ \frac{1}{2} \phi \left( \btheta'-\btheta - \epsilon \hat{g}_{I(\tau)}(\btheta);  \frac{2\epsilon}{n^2} I_d \right) +  \frac{1}{2} \phi \left( \btheta'-\btheta + \epsilon \hat{g}_{I(\tau)}(\btheta); \frac{2\epsilon \beta^2}{n^2} I_d \right) \right\} \nu_{m,n}(\tau'),	\\
\end{align*}
and the target distribution
\begin{align*}
\tilde{\pi}(\btheta, \tau) \propto e^{c_n\hat{\mu}_{I(\tau)}(\btheta)} \nu_{m,n}(\tau).
\end{align*}
Then the acceptance probability is given by 
\begin{align}
r((\btheta, \tau) \to (\btheta', \tau')) & = \min \left\{ 1, \frac{\tilde{\pi}(\btheta', \tau') q((\btheta', \tau') \to (\btheta, \tau))}{\tilde{\pi}(\btheta, \tau) q((\btheta, \tau) \to (\btheta', \tau'))} \right\}	\notag\\
 & = \min \left\{ 1, \frac{ q_{I(\tau')}(\btheta' \to \btheta) \nu_{m,n} e^{c_n \hat{\mu}_{I(\tau')}(\btheta')} \nu_{m,n} }{ q_{I(\tau)}(\btheta \to \btheta') \nu_{m,n} e^{c_n\hat{\mu}_{I(\tau)}(\btheta)} \nu_{m,n}} \right\}	\notag\\
 &=  \min \left\{ 1, \frac{ q_{I(\tau')}(\btheta' \to \btheta) e^{c_n \hat{\mu}_{I(\tau')}(\btheta')} }{ q_{I(\tau)}(\btheta \to \btheta')  e^{c_n\hat{\mu}_{I(\tau)}(\btheta)} } \right\}
\end{align}
which is exactly~\eqref{eq_accept_prob_mas}. 
\section{Improved acceptance probability with RSGLD}
\label{sec_supp_acceptprob}

We now give the proof of Proposition~\ref{prop_accept_prob}, which calculates the proposal ratio of RSGLD. 

\begin{proof}[Proof of Proposition~\ref{prop_accept_prob}]

\noindent \textbf{Case 1) and a forward move in~\eqref{eq_proposal}.}
  
 In this case,
\begin{align}
& \frac{q_{J}(\btheta' \to \btheta)}{q_{I}(\btheta \to \btheta')}	\notag\\
= & \frac{ e^{-\frac{1}{2\beta^2} \left\Vert n\sqrt{\epsilon/2}\left(\hat{g}_J(\btheta')-\hat{g}_I(\btheta) \right) - \bZ\right\Vert_2^2} + e^{-\frac{1}{2} \left\Vert n\sqrt{\epsilon/2}\left(\hat{g}_J(\btheta')+\hat{g}_I(\btheta) \right) + \bZ\right\Vert_2^2} }{e^{-\frac{1}{2} \Vert \bZ \Vert_2^2} + e^{-\frac{1}{2\beta^2} \Vert \sqrt{2\epsilon} n \hat{g}_I(\btheta) + \bZ \Vert_2^2}}.
\label{eq_proposal_ratio}
\end{align}
In the exponent of the denominator, 
\begin{align}
&\frac{1}{\beta^2}\Vert \sqrt{2\epsilon} n \hat{g}_I(\btheta) + \bZ \Vert_2^2 - \Vert \bZ \Vert_2^2		\notag\\
= & \frac{2}{\beta^2} \epsilon n^2 \Vert \hat{g}_I(\btheta) \Vert_2^2 + \frac{2}{\beta^2} \sqrt{2\epsilon}n \hat{g}_I(\btheta)^T \bZ + (1/\beta^2 -1) \Vert \bZ \Vert_2^2 	\notag\\
\geq & \epsilon n^2\|\hat{g}_I(\btheta) \|_2^2(\frac{2}{\beta^2}- \eta_0\frac{2}{\beta^2} - \frac{\eta_0^2}{2}(1-\frac{1}{\beta^2}))	\\
\geq &  C \epsilon n^2 \Vert \hat{g}_I(\btheta) \Vert_2^2  = \Omega_P(d),
\end{align}
for some positive constant $C$ since $\epsilon n^2 \Vert \hat{g}_I(\btheta) \Vert_2^2 \geq \frac{2}{\eta_0^2}\| \bZ \|_2^2$ with high probability. Here we have used
\begin{align}
\left\vert \sqrt{2\epsilon}n \hat{g}_I(\btheta)^T \bZ \right\vert  & \leq \sqrt{2\epsilon}n \Vert \hat{g}_I(\btheta)\Vert_2 \Vert \bZ \Vert_2 \leq \eta_0\epsilon n^2 \Vert \hat{g}_I(\btheta)\Vert_2^2,		\notag\\
\Vert \bZ \Vert^2_2 & \leq \eta^2_0/2 \cdot \epsilon n^2  \Vert \hat{g}_I(\btheta)\Vert^2_2	\qquad \text{ w.h.p.}
\label{eq_crossterm_cs}
\end{align}
for $\eta_0$ small using the condition in Case 1). Thus the denominator in~\eqref{eq_proposal_ratio} is $e^{-1/2\Vert \bZ \Vert_2^2} (1+O(e^{-\Omega_P(d)})) = e^{-1/2\Vert \bZ \Vert_2^2} (1+o_P(1))$.

Similarly in the numerator, the first term dominates. Since $\beta> 1$,
\begin{align}
& \left\Vert n\sqrt{\epsilon/2}\left(\hat{g}_J(\btheta')+\hat{g}_I(\btheta) \right) + \bZ\right\Vert_2^2- \frac{1}{\beta^2}\left\Vert n\sqrt{\epsilon/2}\left(\hat{g}_J(\btheta')-\hat{g}_I(\btheta) \right) - \bZ\right\Vert_2^2 		\notag\\
\geq  &  \left\Vert n\sqrt{\epsilon/2}\left(\hat{g}_J(\btheta')+\hat{g}_I(\btheta) \right) + \bZ\right\Vert_2^2- \left\Vert n\sqrt{\epsilon/2}\left(\hat{g}_J(\btheta')-\hat{g}_I(\btheta) \right) - \bZ\right\Vert_2^2 		\notag\\
= & \frac{\epsilon n^2}{2} \left( \Vert \hat{g}_J(\btheta')+\hat{g}_I(\btheta) \Vert_2^2 -  \Vert \hat{g}_J(\btheta')-\hat{g}_I(\btheta) \Vert_2^2 \right) 	+ 2n\sqrt{2\epsilon} \hat{g}_J(\btheta')^T\bZ	\notag\\
= & 2 n^2 \hat{g}_J(\btheta')^T\left(\epsilon\hat{g}_I(\btheta) +  \frac{\sqrt{2\epsilon}}{n}\bZ\right),
\label{eq_numerator_proposal_ratio}
\end{align}
where $\hat{g}_J(\btheta') = \hat{g}_J(\btheta) + H_J(\btheta_0)(\epsilon  \hat{g}_I(\btheta) + \frac{\sqrt{2\epsilon}}{n}\bZ)$ for some $\btheta_0$ between $\btheta$ and $\btheta'$. 
We can lower bound this term by noting that
\begin{align*}
	\hat{g}_J(\btheta')^T \hat{g}_I(\btheta) & = (\hat{g}_J(\btheta') - \hat{g}_I(\btheta))^T \hat{g}_I(\btheta) + \|\hat{g}_I(\btheta)\|_2^2,
\end{align*}
where by Assumption~\ref{assum_spec_norm}, \ref{assum_similar_grad}, and Eq~\eqref{eq_crossterm_cs},
\begin{align}
& \Vert \hat{g}_J(\btheta')-\hat{g}_I(\btheta)\Vert_2^2 		\notag\\
\leq & 2\Vert \hat{g}_J(\btheta)-\hat{g}_I(\btheta) \Vert_2^2 + 2\Vert H_J(\btheta_0)(\epsilon \hat{g}_I(\btheta) +\sqrt{2\epsilon}/n \bZ) \Vert_2^2	\notag\\
\leq & 2\Vert \hat{g}_J(\btheta)-\hat{g}_I(\btheta) \Vert_2^2 + 2\epsilon^2 \lambda^2 \Vert \hat{g}_I(\btheta)\Vert_2^2 + \frac{4\epsilon \lambda^2}{n^2} \Vert \bZ \Vert_2^2	\notag\\
\leq & C \epsilon^2 \Vert \hat{g}_I(\btheta)\Vert_2^2,
\label{eq_case1_part1}
\end{align}
w.h.p., thus
\begin{align}
\hat{g}_J(\btheta')^T \hat{g}_I(\btheta) \geq (1-C\epsilon^2) \Vert \hat{g}_I(\btheta)\Vert_2^2.
\label{eq_IJcross}
\end{align}
By~\eqref{eq_crossterm_cs} and~\eqref{eq_case1_part1},
\begin{align}
|\hat{g}_J(\btheta')^T\bZ| & \leq \|\hat{g}_J(\btheta') - \hat{g}_I(\btheta)\|_2\|\bZ\|_2 + \|\hat{g}_I(\btheta)\|_2\|\bZ\|_2	\notag\\
& \leq (1+C\epsilon)\|\hat{g}_I(\btheta)\|_2\|\bZ\|_2	\notag\\
& \leq n\sqrt{\frac{\epsilon}{2}} \eta_0(1+C\epsilon) \|\hat{g}_I(\btheta)\|_2^2
\label{eq_case1_part1_}
\end{align}
w.h.p. Eq~\eqref{eq_IJcross} and~\eqref{eq_case1_part1_} imply~\eqref{eq_numerator_proposal_ratio} is lower bounded by
\begin{align*}
2n^2\epsilon(1-C\epsilon^2-\eta_0(1+C\epsilon)) \|\hat{g}_I(\btheta)\|_2^2 \geq 2C_1 n^2\epsilon \|\hat{g}_I(\btheta)\|_2^2 = \Omega_P(d)
\end{align*}
for $\eta_0$ and $\epsilon$ small, and the last part follows from~\eqref{eq_crossterm_cs}. Hence the numerator in~\eqref{eq_proposal_ratio} is $(1+o_P(1))e^{-\frac{1}{2\beta^2} \left\Vert n\sqrt{\epsilon/2}\left(\hat{g}_J(\btheta')-\hat{g}_I(\btheta) \right) - \bZ\right\Vert_2^2} $.


The above approximations show~\eqref{eq_proposal_ratio} can be written as  
\begin{align}
& \frac{e^{-\frac{1}{2\beta^2} \left\Vert n\sqrt{\epsilon/2}\left(\hat{g}_J(\btheta')-\hat{g}_I(\btheta) \right) - \bZ\right\Vert_2^2}}{e^{-\frac{1}{2} \Vert \bZ \Vert_2^2}} (1+o_P(1))	\notag\\
= & (1+o_P(1))\exp \left\{ -\frac{n^2\epsilon}{4\beta^2} \Vert \hat{g}_J(\btheta')-\hat{g}_I(\btheta)\Vert_2^2 + \frac{n}{\beta^2}\sqrt{\frac{\epsilon}{2}} (\hat{g}_J(\btheta')-\hat{g}_I(\btheta))^T\bZ + \frac{1}{2}(1-1/\beta^2) \Vert \bZ \Vert_2^2 \right\}	\notag\\
\geq & (1+o_P(1))\exp\left\{ -C\frac{n^2\epsilon^2(\epsilon \vee \eta_0)}{\beta^2} \Vert \hat{g}_I(\btheta) \Vert_2^2 + \frac{1}{2}(1-1/\beta^2)\Vert \bZ \Vert_2^2 \right\} > 1	\qquad \text{w.h.p.}
\label{eq_proposal_ratio_approx}
\end{align}
by~\eqref{eq_crossterm_cs} and~\eqref{eq_case1_part1} again.

\begin{align}
& \exp\left\{ -C\frac{n^2\epsilon^2(\epsilon \vee \eta_0)}{\beta^2} \Vert \hat{g}_I(\btheta) \Vert_2^2 + \frac{1}{2}(1-1/\beta^2)\Vert \bZ \Vert_2^2 \right\} > 1	\qquad \text{w.h.p.}
\end{align}
when $\beta >1$, since $\frac{n^2\epsilon^2(\epsilon \vee \eta_0)}{\beta^2-1} \Vert \hat{g}_I(\btheta) \Vert_2^2 = o(d)$ in this case. 

\noindent \textbf{Case 1) and a backward move in~\eqref{eq_proposal}.}

In this case, 
\begin{align}
& \frac{q_{J}(\btheta' \to \btheta)}{q_{I}(\btheta \to \btheta')}	\notag\\
= & \frac{ e^{-\frac{1}{2} \left\Vert -n\sqrt{\epsilon/2}\left(\hat{g}_J(\btheta')-\hat{g}_I(\btheta) \right) - \beta \bZ\right\Vert_2^2} + e^{-\frac{1}{2\beta^2} \left\Vert n\sqrt{\epsilon/2}\left(\hat{g}_J(\btheta')+\hat{g}_I(\btheta) \right) - 
\beta \bZ\right\Vert_2^2} }{e^{-\frac{1}{2} \Vert \bZ \Vert_2^2} + e^{-\frac{1}{2} \Vert -\sqrt{2\epsilon} n \hat{g}_I(\btheta) + \beta \bZ \Vert_2^2}}	\notag\\
= & \frac{e^{-\frac{1}{2} \left\Vert -n\sqrt{\epsilon/2}\left(\hat{g}_J(\btheta')-\hat{g}_I(\btheta) \right) - \beta \bZ\right\Vert_2^2} }{e^{-\frac{1}{2} \Vert \bZ \Vert_2^2}} (1+o_P(1))	
\label{eq_proposal_ratio_case2}
\end{align}
by similar arguments as above. From~\eqref{eq_proposal_ratio_case2}, we have
\begin{align}
& \exp\left\{ -\frac{1}{2}(\beta^2-1)\Vert \bZ\Vert_2^2 -\frac{n^2\epsilon}{4} \Vert \hat{g}_J(\btheta')-\hat{g}_I(\btheta) \Vert_2^2 - n\sqrt{\frac{\epsilon}{2}}\beta(\hat{g}_J(\btheta')-\hat{g}_I(\btheta))^T\bZ  \right\}	\notag\\
\leq & \exp\left\{ -\frac{1}{2}(\beta^2-1)\Vert \bZ\Vert_2^2 - n\sqrt{\frac{\epsilon}{2}}\beta(\hat{g}_J(\btheta')-\hat{g}_I(\btheta))^T\bZ  \right\} 	\notag\\
\leq & \exp\left\{ -\frac{1}{2}(\beta^2-1)\Vert \bZ\Vert_2^2 + Cn^2 \epsilon^2\eta_0 \Vert \hat{g}_I(\btheta) \Vert_2^2 \right\}  = o_P(1), 
\end{align}
where we have used~\eqref{eq_crossterm_cs} and~\eqref{eq_case1_part1}, and the condition $\frac{n^2\epsilon^2\eta_0}{\beta^2-1} \Vert \hat{g}_I(\btheta) \Vert_2^2 = o(d)$  in Case 1). 

\noindent \textbf{Case 2) and a forward move in~\eqref{eq_proposal}.}

In this case we have
\begin{align}
& \frac{q_{J}(\btheta' \to \btheta)}{q_{I}(\btheta \to \btheta')}	\notag\\
= & \frac{ e^{-\frac{1}{2\beta^2} \left\Vert n\sqrt{\epsilon/2} \hat{g}_J(\btheta')  - \bZ\right\Vert_2^2} + e^{-\frac{1}{2} \left\Vert n\sqrt{\epsilon/2} \hat{g}_J(\btheta') + \bZ\right\Vert_2^2} }{e^{-\frac{1}{2} \Vert \bZ \Vert_2^2} + e^{-\frac{1}{2\beta^2} \Vert  \bZ \Vert_2^2}}.
\label{eq_proposal_ratio_case3}
\end{align}
Noting that $\hat{g}_J(\btheta')=\hat{g}_J(\btheta) + \frac{\sqrt{2\epsilon}}{n} H_J(\btheta_0) \bZ=\frac{\sqrt{2\epsilon}}{n} H_J(\btheta_0) \bZ$ by Assumption~\ref{assum_similar_grad}, 
\begin{align}
& \left\Vert n\sqrt{\epsilon/2} \hat{g}_J(\btheta')  - \bZ\right\Vert_2^2	\notag\\
= & \frac{n^2\epsilon}{2} \Vert  \hat{g}_J(\btheta') \Vert_2^2 + \Vert \bZ \Vert_2^2 -n\sqrt{2\epsilon}  \hat{g}_J(\btheta')^T\bZ,
\label{eq_part1_case3}
\end{align}
where $\Vert \hat{g}_J(\btheta') \Vert_2^2 \leq \frac{2\epsilon}{n^2} \lambda^2 \Vert \bZ\Vert^2_2$, $|\hat{g}_J(\btheta')^T\bZ| \leq \frac{\sqrt{2\epsilon}}{n}\lambda \Vert \bZ\Vert_2^2$. It follows then~\eqref{eq_part1_case3} is of the same order as $(1+O(\epsilon))\Vert \bZ \Vert_2^2$, where $\epsilon\Vert \bZ \Vert_2^2=o_P(1)$ using the condition on $\epsilon$ in Case 2). The same argument holds for $\left\Vert n\sqrt{\epsilon/2} \hat{g}_J(\btheta')  + \bZ\right\Vert_2^2$. Hence~\eqref{eq_proposal_ratio_case3} is 
\begin{align*}
\frac{ e^{-\frac{1}{2\beta^2} \left\Vert \bZ\right\Vert_2^2} + e^{-\frac{1}{2} \left\Vert \bZ\right\Vert_2^2} }{e^{-\frac{1}{2} \Vert \bZ \Vert_2^2} + e^{-\frac{1}{2\beta^2} \Vert  \bZ \Vert_2^2}} (1 + o_P(1)).
\end{align*}
The proposal behaves like a random walk. 

\noindent \textbf{Case 2) and a backward move in~\eqref{eq_proposal}.}

\begin{align}
& \frac{q_{J}(\btheta' \to \btheta)}{q_{I}(\btheta \to \btheta')}	\notag\\
= & \frac{ e^{-\frac{1}{2} \left\Vert -n\sqrt{\epsilon/2} \hat{g}_J(\btheta') - \beta \bZ\right\Vert_2^2} + e^{-\frac{1}{2\beta^2} \left\Vert n\sqrt{\epsilon/2} \hat{g}_J(\btheta') - 
\beta \bZ\right\Vert_2^2} }{e^{-\frac{1}{2} \Vert \bZ \Vert_2^2} + e^{-\frac{1}{2} \Vert \beta \bZ \Vert_2^2}}.
\end{align}
The same arguments as above can be used to show this ratio is approximately 1. 
\end{proof}

\section{Convolutional neural network for CIFAR-10}
\label{sec_supp_nn}


\begin{table}[ht]
\centering 
\begin{tabular}{c c c c }
Type of layer & Number of filters & Filter size / stride & Output size 	\\
\hline
Convolution & 32 & $5\times 5$ / 1 & $32\times32\times32$ 	\\
ReLU & & 	\\
Max pooling &  & $3\times 3$ / 2 & 	$16\times16\times32$	\\
LRN & 	\\
Convolution & 32 & $5\times 5$ / 1 & $16\times16\times32$	\\
ReLU & 	\\
Max pooling &  & $3\times 3$ / 2 & $8\times8\times32$		\\
LRN & 	\\
Convolution & 64 & $5\times 5$ / 1 &  $8\times8\times64$	\\
ReLU & 	\\
Max pooling &  & $3\times 3$ / 2 & $4\times4\times64$		\\
LRN & 	\\
Fully-connected & & & 10		\\
\hline
\end{tabular}
\caption{Architecture of the 3-layer CNN used on the CIFAR-10 dataset. All Local Response Normalization (LRN) layers used depth radius=3,  bias=1,  alpha=$5\times10^{-5}$,  beta=0.75.}
\label{tab_cnn_structure_cifar10}
\end{table}

\end{appendices}

\bibliographystyle{plain}
\bibliography{ref}
\end{document}